\documentclass[10pt]{article} 
\usepackage[preprint]{tmlr}

\usepackage{amsmath,amsfonts,bm}

\def\eqref#1{equation~\ref{#1}}

\def\1{\bm{1}}

\DeclareMathAlphabet{\mathsfit}{\encodingdefault}{\sfdefault}{m}{sl}
\SetMathAlphabet{\mathsfit}{bold}{\encodingdefault}{\sfdefault}{bx}{n}

\DeclareMathOperator*{\argmin}{arg\,min}

\usepackage{graphicx} 
\usepackage{amsmath}
\usepackage{amssymb}
\usepackage{amsthm}
\usepackage{xcolor}
\usepackage{hyperref}
\usepackage{fullpage}
\usepackage{tikz}
\usepackage{enumitem}
\usetikzlibrary{positioning}
\usepackage{multirow}

\usepackage{booktabs}

\newtheorem{lemma}{Lemma}

\theoremstyle{definition}
\newtheorem{definition}{Definition}

\theoremstyle{remark}

\usepackage{wrapfig}

\usepackage{mathtools}

\usepackage{caption}
\usepackage{subcaption}

\usepackage{algorithm}
\usepackage{algpseudocode}

\usepackage{hyperref}
\usepackage{url}

\usepackage{dsfont}
\usetikzlibrary{positioning}

\title{Score-based Metropolis-Hastings for Fractional Langevin Algorithms}

\author{\name Ahmed Aloui \email ahmed.aloui@duke.edu \\
      \addr Duke University
      \AND
      \name Junyi Liao \email junyi.liao@duke.edu\\
      \addr Duke University
      \AND
      \name Ali Hasan \email ali.hasan@morganstanley.com\\
      \addr Morgan Stanley 
      \AND
      \name Jose Blanchet \email jose.blanchet@stanford.edu \\
      \addr Stanford University 
      \AND
      \name Vahid Tarokh \email vahid.tarokh@duke.edu \\
      \addr Duke University}

\def\month{MM}  
\def\year{YYYY} 
\def\openreview{\url{https://openreview.net/forum?id=XXXX}} 

\begin{document}

\maketitle

\begin{abstract}
Sampling from heavy-tailed and multimodal distributions is challenging when neither the target density nor the proposal density can be evaluated, as in $\alpha$-stable L\'evy-driven fractional Langevin algorithms. While the target distribution can be estimated from data via score-based or energy-based models, the $\alpha$-stable proposal density and its score are generally unavailable, rendering classical density-based Metropolis--Hastings (MH) corrections impractical. Consequently, existing fractional Langevin methods operate in an unadjusted regime and can exhibit substantial finite-time errors and poor empirical control of tail behavior. We introduce the Metropolis-Adjusted Fractional Langevin Algorithm (MAFLA), an MH-inspired, fully score-based correction mechanism. MAFLA employs designed proxies for fractional proposal score gradients under isotropic symmetric $\alpha$-stable noise and learns an acceptance function via Score Balance Matching. We empirically illustrate the strong performance of MAFLA on a series of tasks including combinatorial optimization problems where the method significantly improves finite time sampling accuracy over unadjusted fractional Langevin dynamics.
\end{abstract}

\section{Introduction}

Sampling from heavy-tailed and multimodal distributions remains a fundamental challenge in Bayesian inference and generative modeling, especially when classical MCMC algorithms struggle to explore complex geometries~\citep{roberts1996exponential,robert2004monte,neal2011mcmc}. Fractional Langevin samplers driven by symmetric $\alpha$-stable L\'evy processes have recently emerged as a powerful alternative, offering long jumps and improved exploration in heavy-tailed regimes ~\citep{csimcsekli2017fractional,yoon2023score,shariatianheavy,wang2025fractional}. However, implementing a classical density-based Metropolis--Hastings (MH) ~\citep{metropolis1953equation} correction for these samplers is essentially impossible in practice, since neither the target density nor the $\alpha$-stable proposal density admits a tractable closed form~\citep{nolan2020univariate}. While $p$ can be represented by an energy- or score-model, the conditional $\alpha$-stable proposal density remains intractable. As a consequence, existing fractional Langevin algorithms necessarily operate in an unadjusted regime; following~\citet{csimcsekli2017fractional}, we refer to these methods collectively as the \emph{Fractional Unadjusted Langevin Algorithm} (FULA). 

Density-free approaches cannot provide a density-based MH-inspired correction in the fractional setting. State-of-the-art generative modeling techniques rely on score-based models~\citep{vincent2011connection,song2019generative,song2020score}, which learn only the score $\nabla \log p(x)$ rather than the log-density itself, making it infeasible to directly evaluate the density ratio required by the MH acceptance rule. While it is in principle possible to model the target density via an energy-based model~\citep{{salimans2021should}}, this does not resolve the MH-inspired correction problem, since the proposal density remains unavailable. Independently of how the target distribution is modeled, classical MH requires access to the proposal density $q(x' \mid x)$ in order to compute the acceptance probability \[
a(x',x) = \min\Bigg\{1,\,
\frac{p(x')\,q(x \mid x')}{p(x)\,q(x' \mid x)}
\Bigg\}.
\]
However, except for a few special cases (e.g., Cauchy, Gaussian) when $q$ is an $\alpha$-stable distribution, its density does not admit a closed-form expression and is typically characterized only via its Fourier transform.

The MH correction is particularly crucial in fractional Langevin dynamics, where $\alpha$-stable noise produces occasional long jumps that amplify finite-step discretization bias and strongly affect tail behavior~\citep{roberts1996exponential}. Without this correction, FULA can lead to pronounced distortions in empirical tail statistics and mixture proportions, with these effects becoming more pronounced in higher dimensions. Moreover, as with unadjusted algorithms in general (both classical Langevin and fractional Langevin methods), the sampler becomes overly sensitive to the choice of step size when no MH correction is applied. Even moderate deviations in the step size and discretizations order can lead to degraded finite-time accuracy or numerical instability, further underscoring the motivation of an adjustment mechanism in the fractional setting~\citep{hodgkinson2021implicit}.

Our key idea is that an MH-inspired adjustment for fractional Langevin dynamics can be carried out \emph{without ever evaluating any density}. Instead of relying on intractable target and proposal densities, we enforce a gradient form of detailed balance: we approximate the proposal scores using only the target score and the location-family structure of isotropic symmetric $\alpha$-stable distributions, and we learn an acceptance function that satisfies this gradient constraint via \emph{Score Balance Matching} (SBM)~\citep{aloui2024score}. This yields a fully score-based Metropolis Adjusted Fractional Langevin Algorithm (MAFLA). Our contributions are:

\begin{itemize}
    \item We derive computable, density-free approximations of fractional Langevin proposal scores by exploiting the location-family structure of isotropic symmetric $\alpha$-stable distributions.
    
    \item We adapt Score Balance Matching (SBM) to fractional Langevin dynamics, enabling the learning of an acceptance function that enforces detailed balance at the gradient level without evaluating any density.
    
    \item We show that MAFLA significantly improves finite-time sampling for heavy-tailed targets, addressing the instability of unadjusted fractional Langevin dynamics.
    \item We apply MAFLA to combinatorial optimization via continuous relaxations, showing consistent improvements on challenging problems such as MaxCut and minimum vertex cover, where enhanced exploration from fractional dynamics and stability from Metropolis adjustment jointly lead to higher-quality solutions.
\end{itemize}

\section{Related Work}
\label{sec:related_work}
\textbf{Fractional Langevin Dynamics and L\'evy-driven SDEs.} The development of fractional dynamics for sampling is rooted in statistical physics and probability theory, long before their adoption in machine learning. Early work by \citet{eliazar2003levy} established L\'evy-driven Langevin systems as principled models for anomalous diffusion, while \citet{panloup2008levy} developed the mathematical foundations for approximating invariant measures of L\'evy-driven SDEs. Building on these results, \cite{csimcsekli2017fractional} formally introduced Fractional Langevin Monte Carlo (FLMC) to the machine learning community, and proposed approximating the non-local drift with a scaled local gradient, which leads to the Fractional Unadjusted Langevin algorithm (FULA). This “unadjusted” framework prompted several extensions: \citet{simsekli2020fractional} incorporated momentum to obtain Fractional Underdamped Langevin Dynamics (FULD), and recent generative modeling approaches, including L\'evy-driven score-based models \citep{yoon2023score} and Denoising L\'evy Probabilistic Models (DLPM) \citep{shariatian2024denoising}, leverage L\'evy processes to better capture heavy-tailed structure. Beyond sampling and generative modeling, fractional dynamics have also been adapted to optimization; for instance, \citet{wang2025fractional} demonstrated improved escape behavior in combinatorial optimization tasks.

\textbf{Score-based generative modeling.} Score-based modeling originates from the classical score matching framework of \cite{hyvarinen2005estimation}, later linked to denoising autoencoders by \cite{vincent2011connection}, and further extended to scalable high-dimensional learning through sliced score matching \citep{song2020sliced}. Building on these foundations, score-based generative modeling was developed through noise-conditional score networks \citep{song2019generative} and unified via stochastic differential equations in the reverse-time diffusion framework \citep{song2020score}. Recent work has extended score-based generative modeling to L\'evy-driven SDEs (\citealp{yoon2023score}; \citealp{shariatian2024denoising}).

\textbf{Metropolis–Hastings, Langevin Dynamics, and Score-Based MCMC.} Markov chain Monte Carlo (MCMC) methods originate from the Metropolis–Hastings (MH) algorithm, which enables sampling from complex distributions via an acceptance–rejection correction (\citealp{metropolis1953equation}; \citealp{hastings1970monte}). Incorporating gradient information into proposals led to Langevin-based methods, most notably the Metropolis-adjusted Langevin algorithm (MALA), which improves efficiency while preserving exact invariance of the target distribution (\citealp{roberts1996exponential}; \citealp{robert2004monte}). Building on this idea, score-based MCMC has emphasized the score function as the primary object driving sampling dynamics (\citealp{song2019generative}; \citealp{song2020score}). More recently, \cite{sjoberg2023mcmc} and \cite{aloui2024score} revisited MH-style corrections and introduced MH-like acceptance rules derived directly from score information, enabling MH-style corrections. Moreover, discrete time approximations and weak approximations for jump processes have been studied in the probability literature. For example ~\citet{Mijatovic2014Markov} analyze Markov chain approximations to L\'evy processes and establish sharp convergence rates of transition densities. Related approximations to scale functions for spectrally negative L\'evy processes have also been developed~\citep{Mijatovic2013ScaleFunctions}. In addition, L\'evy processes have been extended to smooth manifolds using Marcus SDEs and generator characterizations~\citep{Mijatovic2021LevyManifolds}, which provides a rigorous foundation for jump driven dynamics on nonlinear state spaces. However, none of these papers propose an MH corrected L\'evy Langevin sampler or work directly in the score-based setting.

\section{Background}
\label{sec:background}


\subsection{Fractional Langevin Algorithm}
\label{sec:fractional_langevin}

\begin{wrapfigure}{r}{0.4\textwidth}
    \vspace{-60pt}
     \centering 
    \includegraphics[width=\linewidth]{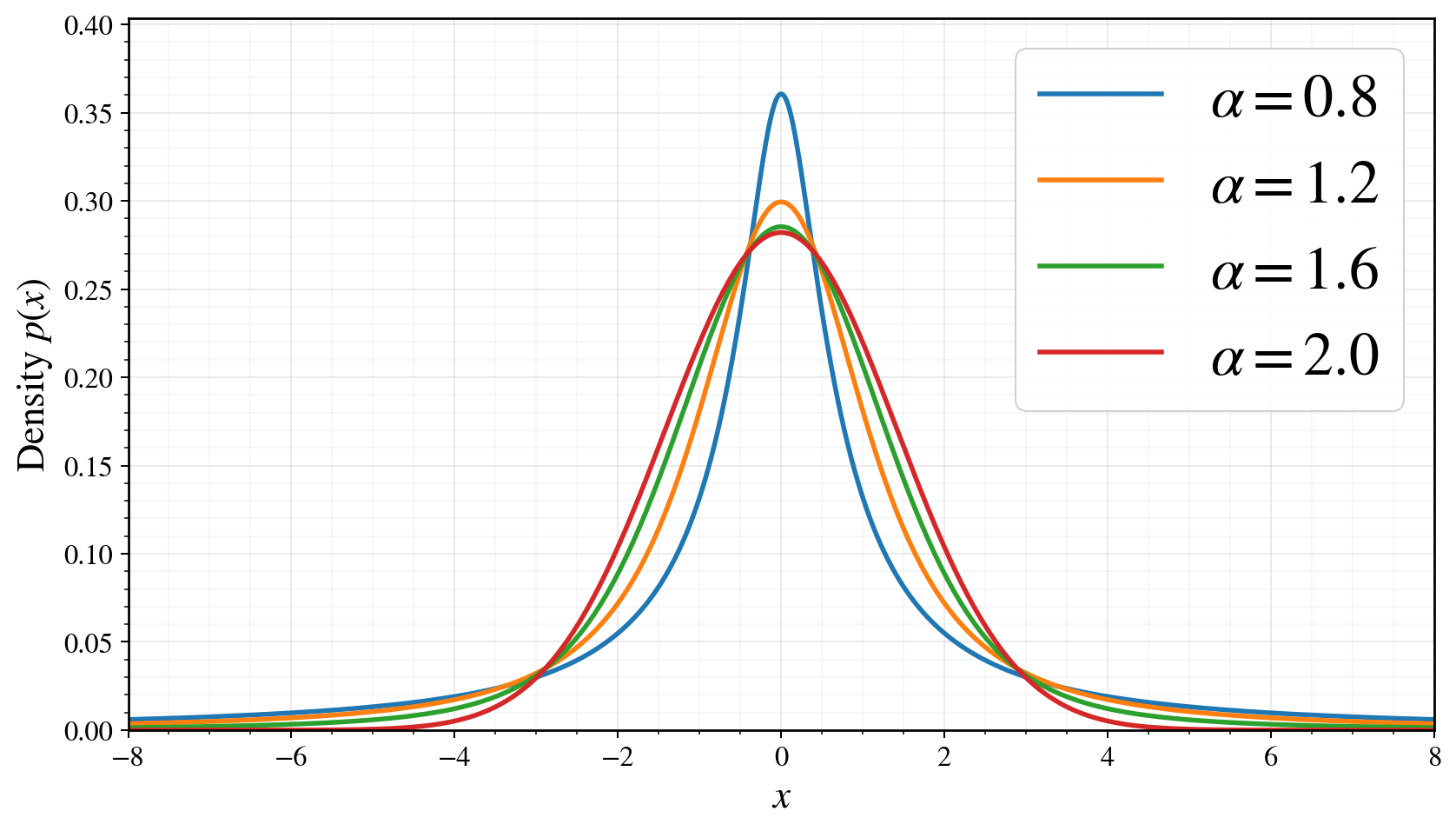}
    \caption{Illustration of symmetric $\alpha$-stable distributions for different tail indices $\alpha$. Smaller $\alpha$ yields heavier tails and higher peak near the origin.}
    \label{fig:alpha-stable}
    \vspace{-10pt}
\end{wrapfigure}

\textbf{Symmetric $\alpha$-stable distributions.} Let $\alpha \in (0,2]$. A real-valued random variable \( X \) is said to follow a \emph{symmetric $\alpha$-stable distribution}, denoted as \( X \sim \mathcal{S}\alpha\mathcal{S}(\sigma) \), if its characteristic function is
\[
\mathbb{E}[e^{i u X}] = \exp\left(-\sigma^\alpha |u|^\alpha\right), \qquad u \in \mathbb{R},
\]

where \( \sigma > 0 \) is a scale parameter. Special cases include Gaussian ($\alpha=2$) and Cauchy ($\alpha=1$); otherwise closed forms are rare. Smaller $\alpha$ yields heavier tails (Fig.~\ref{fig:alpha-stable}).


   

\textbf{Tail behavior and moments.} 

Symmetric $\alpha$-stable laws have finite moments only for orders $<\alpha$; in particular, $\mathrm{Var}(X)=\infty$ for $\alpha<2$.


In higher dimensions, a random vector $L \in \mathbb{R}^d$ is said to be symmetric $\alpha$-stable if every one-dimensional projection $\langle u, L \rangle$ is univariate symmetric $\alpha$-stable for all $u \in \mathbb{R}^d$.
The characteristic function again uniquely specifies the distribution, while the density is typically intractable. In this work, we restrict attention to the isotropic case, where the characteristic function takes the form
$\mathbb{E}[e^{i\langle u,L\rangle}] = \exp(-\sigma^\alpha \|u\|^\alpha)$.

\textbf{L\'evy-driven stochastic differential equations.} Let \( (L_t)_{t \geq 0} \) be a L\'evy process taking values in \( \mathbb{R}^d \), thus $L_t$ has  stationary and independent increments and c\`adl\`ag sample paths. A \emph{L\'evy-driven stochastic differential equation} (or L\'evy-type SDE) is formally written as:
\[
\mathrm{d}X_t = b(X_{t-})\, \mathrm{d}t + \sigma(X_t)\, \mathrm{d}L_t,
\]
where:
\begin{itemize}[itemsep=0pt]
    \item \( b : \mathbb{R}^d \to \mathbb{R}^d \) is the drift vector field,
    \item \( \sigma : \mathbb{R}^d \to \mathbb{R}^{d \times d} \) is the diffusion coefficient,
    \item \( L_t \) is a L\'evy noise, often symmetric and $\alpha$-stable in the context of fractional dynamics, and
    \item $X_{t-}$ denotes the left limit of the process $(X_{t})_{t\geq 0}$.
\end{itemize}

When \( L_t \) is a \emph{symmetric $\alpha$-stable process} with \( \alpha \in (0,2] \), we denote it by $(L_t^\alpha)_{t\geq 0}$~\citep{bertoin1996levy,sato2001basic}. It satisfies:
\begin{itemize}[itemsep=0pt]
    \item[(i)] $L_0^\alpha = 0$ almost surely.
    \item[(ii)] For $0 < t_1 < \dots < t_N$, the increments $(L_{t_n}^\alpha - L_{t_{n-1}}^\alpha)_{n=1}^N$ are independent.
    \item[(iii)] The increment $L_t^\alpha - L_s^\alpha$ is stationary and follows $L_t^\alpha - L_s^\alpha \sim \mathcal{S}\alpha\mathcal{S}(|t - s|^{1/\alpha})$ for $s < t$.
    \item[(iv)] $(L_t^\alpha)$ has stochastically continuous sample paths: for all $\delta > 0$ and $s \geq 0$, 
    \[
    \mathbb{P}(|L_t^\alpha - L_s^\alpha| > \delta) \to 0 \quad \text{as } t \to s.
    \]
\end{itemize}
\textbf{Fractional derivatives.} Let $f: \mathbb{R}^d \rightarrow \mathbb{R}$ belong to the Schwartz class $\mathcal{S}(\mathbb{R}^d)$. The \textit{Riesz fractional derivative} \citep{riesz1949integrale} of order $\gamma>0$ is defined as:
\[
D^{\gamma} f(x) = \mathcal{F}^{-1}\{|\omega|^\gamma \hat{f}(\omega)\},
\]
where $\mathcal{F}$ denotes the Fourier transform and $\hat{f} = \mathcal{F}f$, with
\[
(\mathcal{F}f)(\omega) = \int_{\mathbb{R}^d} f(x) e^{-i2\pi\langle \omega, x\rangle} dx.
\]
\textbf{Fractional Langevin Algorithm.} Introduced in \citep{csimcsekli2017fractional}, the fractional Langevin algorithm is a generalization of the Langevin algorithm to L\'evy-driven dynamics. For a target density $p$ on $\mathbb{R}^d$, the drift term is defined as follows:
\[
b(x) = \frac{D^{\alpha - 2}\left(p(x)\nabla_x \log p(x)\right)}{p(x)}.
\]
In practice, this nonlocal drift is computationally intractable. Following \citet{csimcsekli2017fractional}, we approximate the fractional drift by a scaled standard score. 

\subsection{Score Matching}\label{sec:sm}
Score-based models are grounded in \emph{score matching}~\citep{hyvarinen2005estimation}, which estimates the gradient of the log data density by minimizing the Fisher divergence between the estimated and true gradients of the log data distribution, defined as follows.

\begin{definition}[Fisher Divergence]
    The Fisher divergence \(D_{\nabla}(p_1 \| p_2)\) between two probability distributions \( p_1 \) and \( p_2 \) is defined as:
    \[
     D_{\nabla}(p_1 \| p_2) = \mathbb{E}_{x \sim p_1} \left[ \|\nabla_x \log p_1(x) - \nabla_x \log p_2(x) \|^2 \right].
    \]
\end{definition}
Therefore, given a hypothesis class $\mathcal{S} \subset \{s: \mathcal{X} \rightarrow \mathbb{R}^d\}$, score matching seeks $s^*\in \mathcal{S}$ that minimizes the Fisher divergence to the data distribution $p$: 
\begin{equation}
\label{eq:fisher_loss}
s^* \in \argmin_{s\in \mathcal{S}} \mathbb{E}_{x \sim p} \left[ \|\nabla_x \log p(x) - s(x) \|^2 \right]
\end{equation}
Since the score function is not directly observed, \citet{hyvarinen2005estimation} proved that minimizing in the loss function in Equation \eqref{eq:fisher_loss} is equivalent to minimizing the following loss function:
\begin{equation}
\label{eq:hyv_loss}
    \mathbb{E}_{x \sim p} \left[ \frac{1}{2} \| s(x)\|^2 + \operatorname{tr}\left(\nabla_x s(x)\right) \right].
\end{equation}
This formulation eliminates explicit dependence on the unknown data density, but requires computing the trace of the Jacobian of $s$, which can be expensive in high dimensions. To alleviate this issue, several practical variants have been proposed, including sliced score matching \citep{song2020sliced} and denoising score matching \citep{vincent2011connection,meng2021estimating}.

\begin{definition}[Sliced Score Matching]
    Sliced score matching (SSM) replaces the Jacobian trace in \eqref{eq:hyv_loss} with a stochastic estimator by projecting onto random directions \( v \sim \mathcal{N}(0, I_d) \):
    \begin{equation}
        \mathbb{E}_{x \sim p, v \sim \mathcal{N}(0, I_d)} \left[ \frac{1}{2} \| s(x) \|^2 + v^\top \nabla_x s(x) v \right].
    \end{equation}
\end{definition}

\begin{definition}[Denoising Score Matching]
     Given a noise distribution \( p_\sigma(\tilde{x} | x) \), denoising score matching minimizes the objective
    \begin{equation}
        \mathbb{E}_{x \sim p, \tilde{x} \sim p_\sigma(\tilde{x} | x)} \left[ \|\nabla_{\tilde{x}} \log p_\sigma(\tilde{x} | x) - s(\tilde{x})\|^2 \right].
    \end{equation}
\end{definition}

In this work, we adopt sliced score matching to learn an approximation to the score function of the target distribution from observed heavy-tailed samples, as it provides an efficient and scalable alternative to exact score matching in high dimensions.

\subsection{Metropolis-Hastings}\label{sec:mh}

The Metropolis--Hastings (MH) algorithm \citep{metropolis1953equation, hastings1970monte} constructs a Markov chain using an \emph{acceptance function} \( a: \mathcal{X} \times \mathcal{X} \rightarrow [0,1] \), which defines the probability \( a(x', x) \) of transitioning from the current state \( x \in \mathcal{X} \) to a proposed state \( x' \in \mathcal{X} \). A common choice of the acceptance function is given by:
\begin{equation}
\label{eqn:ratio_acceptance}
 a(x', x) = \min \left\{ 1, \frac{p(x') q(x | x')}{p(x) q(x' | x)} \right\}  
\end{equation}
where \( p(x) \) is the target distribution and \( q(x | x') \) is the proposal distribution. The MH algorithm proceeds as follows:
\begin{itemize}[itemsep=-2pt]
    \item \textbf{Initialize:} Choose an initial state \( x_1 \in \mathcal{X} \), a proposal distribution \( q(x'|x) \), and the number of iterations \( T \).
    \item \textbf{For} \( t = 1, \dots, T \):
    \begin{itemize}[itemsep=0pt]
        \item Propose \( x' \sim q(x'|x_t) \).
        
        \item Compute the acceptance probability \(a(x', x_t)\)~\eqref{eqn:ratio_acceptance}.
        
        \item Draw \( u \sim U(0, 1) \). If \( u \leq a(x', x_t) \), accept $x^\prime$ and set \( x_{t+1} = x' \); otherwise, set \( x_{t+1} = x_t \).
    \end{itemize}
\end{itemize}
Note that evaluating the acceptance probability in \eqref{eqn:ratio_acceptance} requires access to the target density $p(x)$ up to a normalizing constant. 

\textbf{Detailed Balance.} Although the acceptance function is often defined by Equation~\eqref{eqn:ratio_acceptance}, a sufficient condition for ensuring convergence to the target distribution is that the acceptance ratio satisfies the \emph{detailed balance} condition:
\begin{equation}
\label{eqn:detailed_balance}
\forall x,x' \in \mathcal{X}, \quad \frac{a(x', x)}{a(x, x')} = \frac{p(x') \,q(x \mid x')}{p(x)\, q(x' \mid x)}.
\end{equation}
Here, $p(x)$ denotes the target probability density, and $q(x^\prime|x)$ denotes the proposal transition density.
In particular, any acceptance function satisfying the detailed balance condition above ensures that the Markov chain admits $p(x)$ as its unique stationary distribution.

\section{Approximating Proposal Scores }
\label{sec:frac-proposal}

We now describe the fractional Langevin proposals used in MAFLA. We assume access to the target score
$
s_{\theta}(x) \approx \nabla_x \log p(x),
$
e.g., given by a trained deep neural network modeling the score function. Following~\cite{csimcsekli2017fractional}, we approximate the nonlocal fractional drift:
\begin{equation}
    \tilde b(x) = c_\alpha \nabla_x \log p(x), 
    \qquad 
    c_\alpha = \frac{\Gamma(\alpha - 1)}{\Gamma(\alpha/2)^2},
    \label{eq:frac-drift-approx}
\end{equation}
for $\alpha\in(1,2]$, where $\Gamma(\cdot)$ is the Gamma function. 

\textbf{L\'evy-driven proposal and location-family structure.} Let $\mathcal{S}\alpha\mathcal{S}(1)$ denote the standard symmetric $\alpha$-stable distribution on $\mathbb{R}^d$ with unit scale. Given a step size $\tau>0$ and an $\alpha$-stable random vector $\xi\sim\mathcal{S}\alpha\mathcal{S}(1)$, we define the L\'evy-driven proposal:
\begin{equation}
    x' = x + \tau\, \tilde b(x) + \tau^{1/\alpha} \xi, 
    \qquad \xi \sim \mathcal{S}\alpha\mathcal{S}(1).
    \label{eq:frac-proposal}
\end{equation}
Conditioned on $x$, the proposal $x'$ follows a symmetric $\alpha$-stable \emph{location family} with density
\begin{equation}
    q(x'|x)
    =\tau^{-d/\alpha}\,
    f_\alpha\!\left(
        \frac{x'-x-\tau\tilde b(x)}{\tau^{1/\alpha}}
    \right),
    \label{eq:q-density-formal}
\end{equation}
where $f_\alpha$ denote the (intractable) density of the standard symmetric $\alpha$-stable distribution. For most values of $\alpha$, the functional form of $f_\alpha$ has no closed form expression \footnote{ While $f_\alpha$ admits an integral representation via the inverse Fourier transform (e.g., a Hankel–Bessel integral for radial characteristic functions), no closed-form is available for most $\alpha$.}. Therefore, neither $q(x' \mid x)$ nor its logarithm can be evaluated explicitly, and hence we cannot compute the classical MH acceptance ratio.

\textbf{Fractional proposal scores.} Intuitively, while the proposal density itself is unavailable, its score can be approximated through the structure of the $\alpha$-stable location family and fractional calculus, which enables gradient-based corrections without explicit density evaluation. In our setting, even if though the density $f_\alpha$ is intractable, its associated \emph{fractional score} admits a closed form expression. Define 
\[
r = x'-x-\tau\tilde b(x), \qquad 
r_{\mathrm{rev}} = x-x'-\tau\tilde b(x').
\]
From~\cite{yoon2023score}, for $\alpha \in (1,2]$, the fractional score of a symmetric isotropic $\alpha$-stable location family with scale $\tau^{1/\alpha}$ is given by
\[
\begin{aligned}
    S^{(\alpha)}_q(x'|x)
& = \frac{D^{\alpha - 2}\left(q(x'|x)\nabla_{x'} \log q(x'|x)\right)}{q(x'|x)} \\
& = -\frac{1}{\gamma^{\alpha - 1}} \cdot \frac{1}{\alpha} \cdot \left( \frac{x' - x - \tau \tilde b(x)}{\gamma} \right),
\end{aligned}
\]
where $\gamma=\tau^{1/\alpha}$. We can simplify this to
\[
S^{(\alpha)}_q(x'|x)
= -\frac{1}{\alpha}\,\frac{x'-x-\tau\tilde b(x)}{\tau}
= -\frac{1}{\alpha\tau}\,r.
\]
According to ~\citep{csimcsekli2017fractional}, we approximate the log-gradient by a scaled fractional score, \(\nabla_{x'} \log q(x'|x) \approx c_\alpha^{-1} S^{(\alpha)}_q(x'|x)\),
to obtain
\begin{equation}
    \nabla_{x'} \log q(x'|x) \approx -\kappa\,r,
    \qquad
    \kappa = \frac{1}{\alpha\,c_\alpha\,\tau}.
    \label{eq:grad-xprime-q}
\end{equation}
For $\alpha\in(1,2]$ and small step size $\tau$, the scaled fractional score provides a first-order approximation to the ordinary score in the local regime where proposals concentrate, with higher-order errors vanishing as $\tau\to 0$. This approximation is sufficient for enforcing detailed balance at the gradient level in the SBM objective.

To find the gradient with respect to \(x\), define
\(z = (x'-m(x))/\tau^{1/\alpha}\) with the location map \(m(x) = x + \tau \tilde b(x)\). Then
\[
\nabla_x z = -\tau^{-1/\alpha}(I+\tau J_{\tilde b}(x)),
\]
where \(J_{\tilde b}(x)\) is the Jacobian of \(\tilde b\). By the chain rule,
\[
\begin{aligned}
\nabla_x \log q(x'|x)
& = (\nabla_x z)^\top \nabla_z \log f_\alpha(z) \\
& = -(I+\tau J_{\tilde b}(x))^\top \nabla_{x'} \log q(x'|x).
\end{aligned}
\]
Substituting \eqref{eq:grad-xprime-q} yields
\[
\nabla_x \log q(x'|x)
\approx \kappa\,(r+\tau J_{\tilde b}(x)^\top r).
\]
Applying the same argument to the reverse proposal \(q(x|x')\)
and exchanging \(x\leftrightarrow x'\) gives the complete set of approximate proposal gradients:
\begin{equation}
\begin{aligned}
\nabla_{x'} \log q(x'|x) &\approx -\kappa\, r, \\
\nabla_{x} \log q(x'|x) &\approx \kappa\,(r+\tau J_{\tilde b}(x)^\top r), \\
\nabla_{x} \log q(x|x') &\approx -\kappa\, r_{\mathrm{rev}}, \\
\nabla_{x'} \log q(x|x') &\approx \kappa\,(r_{\mathrm{rev}}+\tau J_{\tilde b}(x')^\top r_{\mathrm{rev}}).
\end{aligned}
\label{eq:frac-proposal-grads}
\end{equation}

which we refer to as the \emph{proposal scores}. These expressions are computable from samples and the drift approximation~\eqref{eq:frac-drift-approx}, without ever evaluating the density $f_\alpha$.

\begin{figure*}[t]
    \centering
    \begin{subfigure}[t]{0.33\textwidth}
        \centering
        \includegraphics[width=0.8\linewidth]{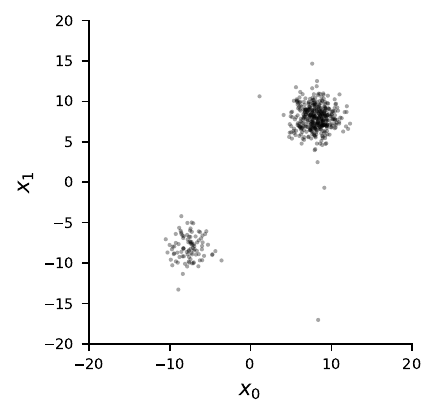}
        \subcaption{Target data. Mixture weights $(0.2, 0.8)$.}
        \label{fig:data-mixture}
    \end{subfigure}\hfill
    \begin{subfigure}[t]{0.33\textwidth}
        \centering
        \includegraphics[width=0.8\linewidth]{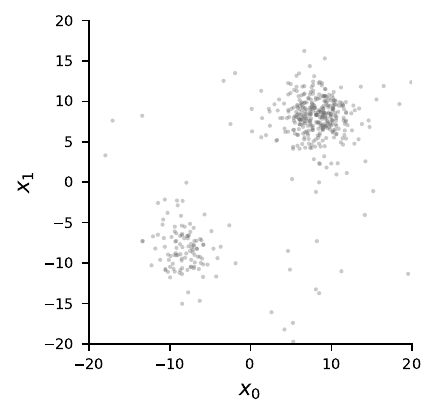}
        \subcaption{FULA. Empirical weights $(0.26, 0.74)$.}
        \label{fig:fula-mixture}
    \end{subfigure}\hfill
    \begin{subfigure}[t]{0.33\textwidth}
        \centering
        \includegraphics[width=0.8\linewidth]{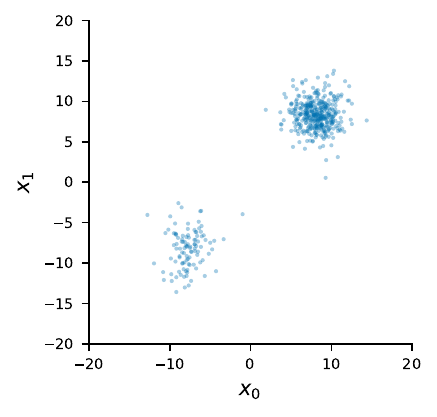}
        \subcaption{MAFLA. Empirical weights $(0.21, 0.79)$.}
        \label{fig:mafla-mixture}
    \end{subfigure}
    \caption{
    Qualitative comparison on a 2D $\alpha$-stable mixture with weights $(0.2, 0.8)$ for $\alpha = 1.95$.
    MAFLA produces samples that visually match both modes and mixture weights more closely than FULA, which oversamples the left mode and exhibits noisier samples.
    }
    \label{fig:mixture-scatter}
\end{figure*}
\begin{figure*}[t]
    \centering
    \begin{minipage}{0.34\linewidth}
        \centering
        \includegraphics[width=\linewidth]{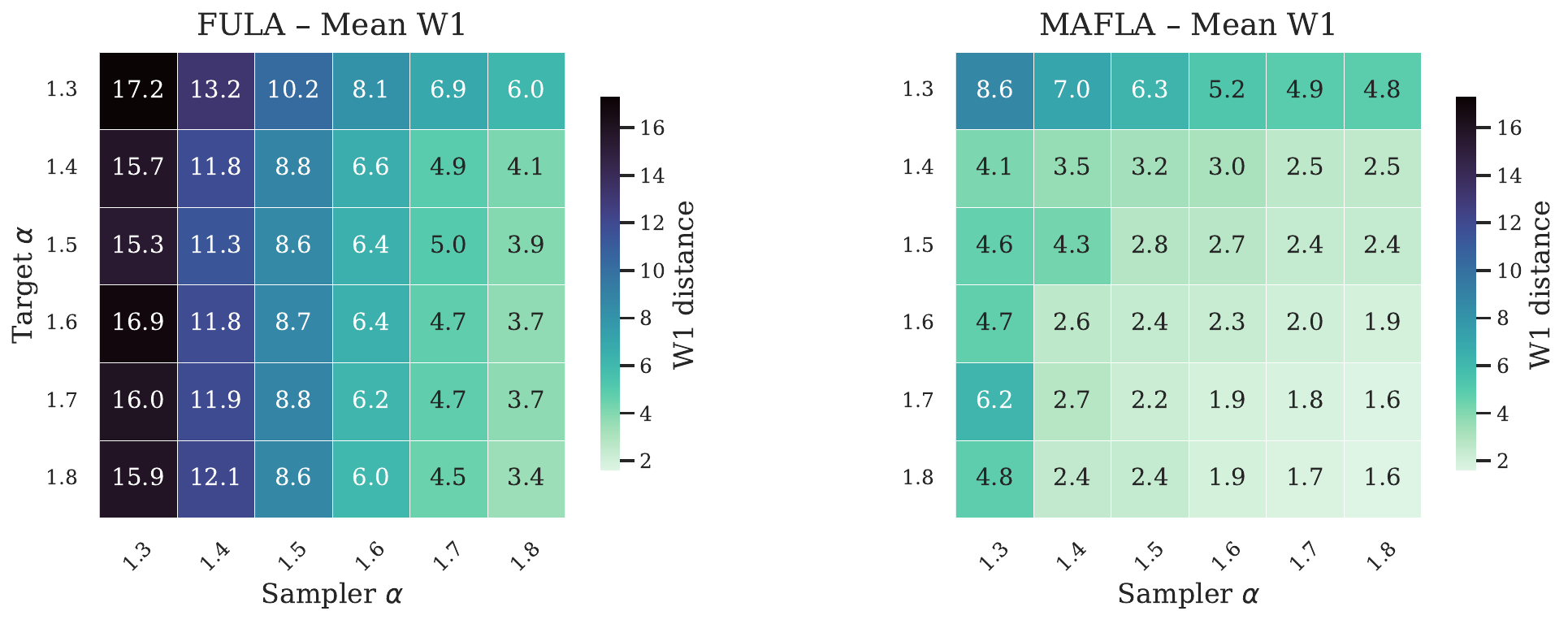}
    \end{minipage}
    \begin{minipage}{0.32\linewidth}
        \centering
        \includegraphics[width=\linewidth]{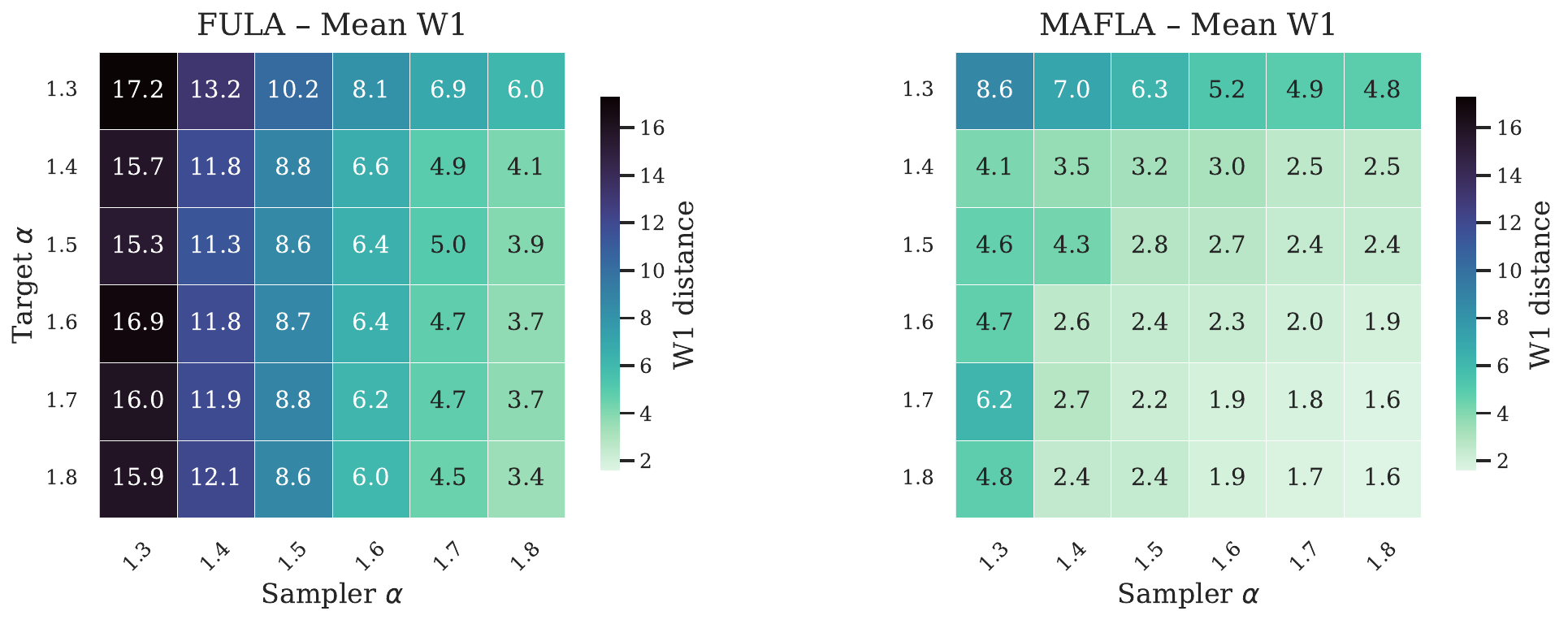}
    \end{minipage}\hfill
    \begin{minipage}{0.32\linewidth}
        \centering
        \includegraphics[width=\linewidth]{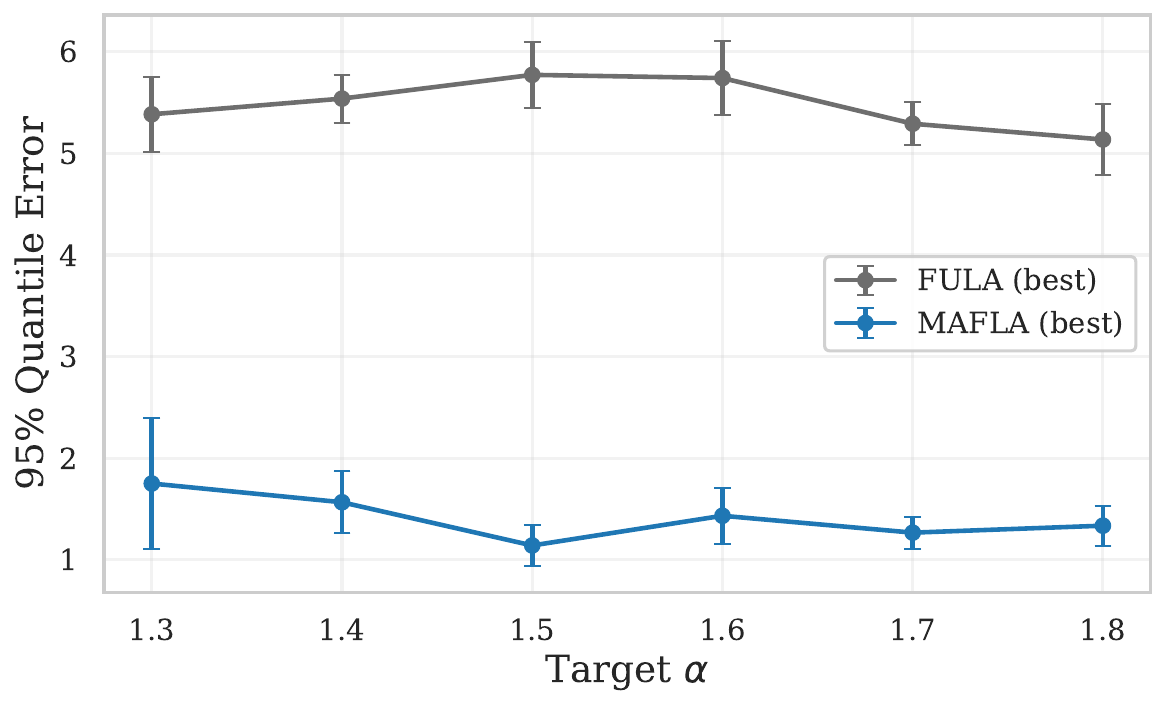}
    \end{minipage}
    \caption{
        Effect of the proposal stability index.
        \textbf{Left:} Mean $W_1$ distance for FULA and MAFLA over a grid of
        target $\alpha_{\text{tgt}}$ and proposal $\alpha_{\text{prop}}$.
        \textbf{Right:} Best $95\%$ quantile error (mean $\pm$ std over runs)
        as a function of $\alpha_{\text{tgt}}$, comparing FULA (gray) and
        MAFLA (blue).
    }
    \label{fig:alpha-grid}
\end{figure*}

\begin{algorithm}[t]
\caption{Parallel Score-based Metropolis-Adjusted Fractional Langevin Algorithm (MAFLA)}
\label{alg:parallel_smafla}
\begin{algorithmic}[1]
\Require Number of particles $N$, initial states $(x_0^{(i)})_{i=1}^N \subset \mathbb{R}^d$, step size $\tau>0$, stability index $\alpha\in(1,2]$, number of iterations $T$, score network $s_\theta(x)$, acceptance network $a_\phi(x',x)$.
\State Compute the scaling constant $c_\alpha = \Gamma(\alpha - 1)/\Gamma(\alpha/2)^2$.
\For{$t=0$ to $T-1$}
    \State \textbf{Drift computation (batched):} $\tilde b^{(i)} \gets c_\alpha\,s_\theta\big(x_t^{(i)}\big)$ for all $i = 1,\dots,N$.
    \State \textbf{Sample L\'evy noise (batched):} draw $\xi_t^{(i)} \sim \mathcal{S}\alpha\mathcal{S}(1)$ independently for all $i = 1,\dots,N$.
    \State \textbf{Propose (batched):} $x'^{(i)} \gets x_t^{(i)} + \tau\,\tilde b^{(i)} + \tau^{1/\alpha}\xi_t^{(i)}$ for all $i$.
    \State \textbf{Acceptance probabilities (batched):} $\alpha_t^{(i)} \gets a_\phi\big(x'^{(i)},x_t^{(i)}\big)$ for all $i$.
    \State Draw $u^{(i)} \sim \mathrm{Unif}(0,1)$ independently for all $i$.
    \For{$i = 1$ to $N$}
        \If{$u^{(i)} < \alpha_t^{(i)}$}
            \State $x_{t+1}^{(i)} \gets x'^{(i)}$\Comment{accept}
        \Else
            \State $x_{t+1}^{(i)} \gets x_t^{(i)}$\Comment{reject}
        \EndIf
    \EndFor
\EndFor
\State \Return $\big(x_t^{(i)}\big)_{t=0}^T{}_{,\,i=1}^N$.
\end{algorithmic}
\end{algorithm}


\section{Score Balance Matching}

Following \citep{aloui2024score}, we now explain how to learn an acceptance function $a(x',x)$ using only target and proposal scores.

\textbf{Gradient form of detailed balance. }
\label{sec:grad-detailed-balance}
Let $p$ denote the target distribution with score $\nabla \log p(x)$ and $q$ a proposal kernel. A Metropolis--Hastings transition with proposal $q$ and acceptance function $a:\mathcal{X}^2\to[0,1]$ satisfies the detailed balance condition: i.e., for every $ x, x' \in \mathbb{R}^d$
\begin{equation}
p(x)\, q(x' \mid x)\, a(x, x') = p(x')\, q(x \mid x')\, a(x', x)
\label{eq:dbalance}
\end{equation}
Assuming $a(x',x)>0$ on the support of $p(x)\,q(x'|x)$, taking the logarithm and gradient with respect to $(x,x')$ yields
\begin{equation}
\begin{aligned}
& \nabla\log p(x) + \nabla\log q(x'|x) + \nabla\log a(x,x') = \nabla\log p(x') + \nabla\log q(x|x') + \nabla\log a(x',x),
\end{aligned}
\label{eq:grad-detailed-balance}
\end{equation}
where $\nabla = (\nabla_x, \nabla_{x'})$. This condition depends only on target scores, proposal score gradients, and acceptance score gradients, without requiring evaluation of any density. The equivalence between \eqref{eq:dbalance} and \eqref{eq:grad-detailed-balance} is shown in Appendix~\ref{sec:appendix-grad-detailed-balance}.

\textbf{Score Balance Matching (SBM).}
Using the gradient detailed balance condition \eqref{eq:grad-detailed-balance}, we learn an acceptance function $a(x',x)\in(0,1]$ by minimizing the \emph{Score Balance Matching} objective
\begin{equation}
\mathcal{L}_{2}(a)\!=\!\mathbb{E}
\left[\big\|
\nabla\log a(x',x)\!-\!\nabla\log a(x,x')\!-\!\Delta_p\!-\!\Delta_q
\big\|_2^2\right],
\label{eq:sbm_l2}
\end{equation}
where
\[
\begin{aligned}
\Delta_p &:= \nabla\log p(x')-\nabla\log p(x),\\
\Delta_q &:= \nabla\log q(x|x')-\nabla\log q(x'|x).
\end{aligned}
\]

Minimizing \eqref{eq:sbm_l2} encourages $a$ to satisfy detailed balance while avoiding evaluation of $p$ and $q$ (up to normalizing constants).

\textbf{Fractional variant of the SBM objective.} Motivated by the $\alpha$-stable dynamics (Section~\ref{sec:frac-proposal}), we introduce a fractional variant of the SBM objective, which is an $\alpha$-norm version of the loss:
\begin{equation}
\mathcal{L}_{\alpha}(a)\!
=\!\mathbb{E}\left[\!
\left\|
\nabla\log a(x',x)\!-\!\nabla\log a(x,x')\!-\!\Delta_p\!-\!\Delta_q
\right\|_\alpha^\alpha\right].
\label{eq:sbm_alpha}
\end{equation}
where $\|\cdot\|_\alpha$ is the $\ell_\alpha$ norm.
We then combine the two objectives into a single training criterion:
\begin{equation}
\label{eq:sbm_objective}
\mathcal{L}(a)
= \mathcal{L}_{2}(a) + \lambda_\alpha\, \mathcal{L}_{\alpha}(a),
\end{equation}
with $\lambda_\alpha \ge 0$ controlling the contribution of the fractional loss component.

\paragraph{Entropy regularization.}
To avoid degenerate near-zero acceptance, we add an entropy penalty as in~\citet{aloui2024score}, we augment the loss with an entropy term:
\begin{align}
\mathcal{L}_{r}(a)
&= \mathcal{L}(a) + \lambda\, \mathbb{E}\!\left[ H(a(x',x)) \right],
\label{eq:sbm_ent}
\end{align}
where
\begin{equation}
\begin{aligned}
H(a(x',x))
& =
a(x',x)\,\log a(x',x) +
\big(1-a(x',x)\big)
\log\!\big(1-a(x',x)\big),
\end{aligned}
\end{equation}
and $\lambda \ge 0$ controls the strength of entropy regularization. Minimizing this term encourages acceptance probabilities with higher entropy, preventing collapse to trivial or uninformative solutions. In practice, SBM enforces gradient balance only approximately, since both the target score and the proposal scores are themselves approximations. We do not provide theoretical bounds linking these approximations to stationarity error, and therefore rely on empirical validation. Establishing such guarantees is an important direction for future work.

\section{Metropolis-Adjusted Fractional Langevin}

We now present the practical algorithm combining fractional Langevin proposals and an SBM-trained acceptance function.

\textbf{Training the acceptance network.} We parameterize the acceptance function by a neural network $a_\phi:\mathbb{R}^d\times\mathbb{R}^d\to(0,1)$, typically of the form $a_\phi(x',x) = \sigma(g_\phi(x',x)),$
where $g_\phi$ is an unconstrained neural network, i.e, $g_\phi: \mathbb{R}^d\times\mathbb{R}^d\to\mathbb{R}$, learning the acceptance logits, and $\sigma$ denotes the logistic sigmoid.

\begin{wrapfigure}{r}{0.4\textwidth}
    \vspace{-12pt}
    \centering
    \includegraphics[width=\linewidth]{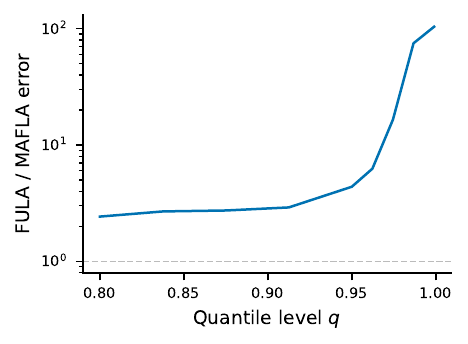}
    \caption{
    Quantile error ratio. MAFLA uniformly dominates as the ratio is greater than $1$.
    }
    \label{fig:mixture-quantile-improvement}
    \vspace{-20pt}
\end{wrapfigure}

\textbf{Curriculum interpolation for SBM training.} To stabilize training, we generate training pairs using a convex interpolation between data samples and proposal samples: \(x'^{(i)} = \eta v' + (1 - \eta) \tilde{x}\), with \(\tilde{x} \sim p\) and \(v' \sim q(\cdot\mid \tilde{x})\), where \(\eta \in [0,1]\) controls how much \(x'\) deviates from the data distribution towards the proposal. In the first epochs of training, we begin with smaller values of \(\eta\) to favor data from the true distribution, and then gradually increase \(\eta\) to estimate the acceptance function around the proposal region. This approach is motivated by the fact that the score function may not be well-estimated outside of the data region, and starting with smaller \(\eta\) helps prevent biasing the training.

In practice, we replace the target score $\nabla\log p$ with the learned score $s_\theta(x)$, and the proposal score gradients $\nabla\log q$ with the approximate proposal scores~\eqref{eq:frac-proposal-grads}.

Algorithm~\ref{alg:parallel_smafla} presents MAFLA (parallel multistart). We use the parallel form in all experiments for scalability and fair comparison with FULA. For completeness, the sequential version of MAFLA (Algorithm~\ref{alg:smafla}) is presented in the appendix.

\section{Experiments}

We evaluate MAFLA against FULA on heavy-tailed sampling and combinatorial optimization.

\textbf{Heavy-tailed mixtures.} 
We first sample from a 2-component $\alpha$-stable mixture with $\alpha=1.95$ (Fig.~\ref{fig:mixture-scatter}), where MH correction remains crucial even near the Gaussian regime. MAFLA improves mixture weight recovery and reduces error from $W_1=4.87$ (FULA) to $0.52$, and the $0.99$-quantile error from $80.33$ to $0.80$ (Fig.~\ref{fig:mixture-quantile-improvement}).

\textbf{Proposal index misspecification.} We vary the proposal stability $\alpha_{\text{prop}}$ across targets $\alpha_{\text{tgt}}$ using a 4-D symmetric $\alpha$-stable location family with mean $(1,2,3,4)$. Over the $(\alpha_{\text{tgt}},\alpha_{\text{prop}})$ grid, MAFLA consistently yields smaller mean $W_1$ and improves the best $95\%$ quantile error per $\alpha_{\text{tgt}}$ by $\sim 3$--$5\times$, while being less sensitive to $\alpha_{\text{prop}}$ misspecification (Fig.~\ref{fig:alpha-grid}).

\textbf{Step-size sensitivity.} On a 4-D $\alpha$-stable mixture with $\alpha=1.5$, we vary $\tau$ over three orders of magnitude and report $W_1$, $q_{0.95}$, and $q_{0.99}$ errors. Errors increase with $\tau$ for both methods, but MAFLA consistently attains lower $W_1$ and tail errors, indicating improved robustness to step-size misspecification (Fig.~\ref{fig:tau-ablation}).

\textbf{Scaling with dimension. }
For a symmetric bimodal $\alpha$-stable mixture with modes $\pm \mathbf{1}_d$ and weights $0.6/0.4$ at $\alpha=1.9$, we vary $d\in\{8,12,16,20,24,28,32\}$. Both samplers degrade with $d$, but MAFLA remains better across all dimensions, reducing $W_1$ by $\sim 20$--$35\%$ and improving $95\%/99\%$ tail errors (Fig.~\ref{fig:dim-ablation}).

\textbf{Combinatorial optimization.} Using continuous relaxations with explicit score functions (Appendix~\ref{sec:codetail}), we apply ULA/FULA/MALA/MAFLA to MaxCut and minimum vertex cover on ER and BA graphs with $N\in\{64,256,512,1024\}$. For MaxCut, MAFLA achieves the best mean and best-found cuts across all settings (Table~\ref{tab:maxcut_ba_er_compact}). For vertex cover, MAFLA yields the smallest mean and best cover sizes and the lowest uncovered-edge ratio before greedy decoding (Table~\ref{tab:vertex_cover} in the Appendix), indicating improved solution quality and feasibility. These gains can be attributed to the complementary roles of fractional dynamics and Metropolis adjustment: the heavy-tailed fractional noise encourages broader exploration and facilitates escape from poor local minima, while the learned acceptance function selectively filters unlikely proposals, stabilizing the sampling dynamics and enforcing detailed balance.

\section{Conclusion}
We propose MAFLA, an MH-inspired $\alpha$-stable L\'evy sampler for settings where both target and proposal densities are intractable. MAFLA replaces density ratios with a gradient-form detailed balance objective, combining score-based fractional drift approximations, closed-form proposal-score proxies for symmetric isotropic $\alpha$-stable location families, and a learned acceptance function trained via Score Balance Matching. Across heavy-tailed targets, MAFLA improves over FULA in $W_1$ and tail-quantile error, and is more robust to $\alpha$, step size, and dimension; it also performs competitively on MaxCut and vertex cover, outperforming ULA, FULA, and MALA.



\begin{table*}[h]
\centering
\caption{MaxCut results on Barabási--Albert (BA) and Erdős--Rényi (ER) graphs. Reported as mean\,$\pm$\,std over runs, plus best cut found. The superscript $^\uparrow$ means higher values are preferred.}
\label{tab:maxcut_ba_er_compact}
\small
\setlength{\tabcolsep}{6pt}

\begin{tabular}{c l c c c c c c}
\toprule
\multirow{2}{*}{$N$} & \multirow{2}{*}{Sampler}
& \multicolumn{3}{c}{Barabási–Albert ($m=2$)}  & \multicolumn{3}{c}{Erdős-Rényi ($p=0.1$)}  \\
\cmidrule(lr){3-5}\cmidrule(lr){6-8}
&& Energy ($\mu\pm\sigma$) & Cut$^{\uparrow}$ ($\mu\pm\sigma$) & Best$^{\uparrow}$
 & Energy ($\mu\pm\sigma$) & Cut$^{\uparrow}$ ($\mu\pm\sigma$) & Best$^{\uparrow}$ \\
\midrule

\multirow{4}{*}{64}
 & ULA   & $-1.20\pm1.16$ & $67.97\pm5.58$ & 90
 & $-3.89\pm2.02$ & $214.50\pm9.68$ & 253 \\
 & FULA  & $-2.93\pm1.35$ & $74.46\pm5.46$ & 92
 & $-6.22\pm2.49$ & $222.21\pm10.04$ & 256 \\
 & MALA  & $-0.99\pm1.05$ & $67.23\pm5.48$ & 87
 & $-3.23\pm1.89$ & $212.32\pm9.70$ & 249 \\
 & MAFLA & $-3.66\pm1.38$ & $\mathbf{77.72\pm5.70}$ & \bf 100
 & $-6.85\pm2.55$ & $\mathbf{225.03\pm10.41}$ & $\mathbf{264}$ \\
\midrule

\multirow{4}{*}{256}
 & ULA   & $-1.22\pm1.16$ & $265.48\pm11.22$ & 311
 & $-15.24\pm4.02$ & $3391.87\pm38.36$ & 3530 \\
 & FULA  & $-4.88\pm1.40$ & $295.13\pm11.43$ & 335
 & $-24.83\pm4.94$ & $3455.98\pm39.79$ & 3621 \\
 & MALA  & $-1.02\pm1.09$ & $264.25\pm11.16$ & 305
 & $-12.46\pm3.75$ & $3373.21\pm38.58$ & 3519 \\
 & MAFLA & $-7.31\pm1.36$ & $\bf 316.59\pm11.33$ & \bf 361
 & $-34.35\pm7.05$ & $\mathbf{3536.37\pm60.00}$ & \bf 3728 \\
\midrule

\multirow{4}{*}{512}
 & ULA   & $-1.25\pm1.16$ & $526.54\pm15.83$ & 579
 & $-30.08\pm5.70$ & $13432.87\pm76.49$ & 13749 \\
 & FULA  & $-6.11\pm1.41$ & $583.23\pm16.33$ & 633
 & $-49.45\pm7.01$ & $13615.99\pm80.01$ & 13881 \\
 & MALA  & $-1.01\pm1.08$ & $524.27\pm15.84$ & 592
 & $-24.46\pm5.21$ & $13379.40\pm75.47$ & 13697 \\
 & MAFLA & $-9.04\pm1.36$ & $\bf 620.95\pm16.20$ & \bf 685
 & $-63.60\pm7.96$ & $\bf 13780.58\pm91.54$ & \bf 14093 \\
\midrule

\multirow{4}{*}{1024}
 & ULA   & $-1.27\pm1.18$ & $1045.70\pm22.65$ & 1129
 & $-59.38\pm8.02$ & $53142.28\pm152.80$ & 53661 \\
 & FULA  & $-7.38\pm1.39$ & $1148.52\pm23.01$ & 1237
 & $-97.93\pm9.66$ & $53659.39\pm156.13$ & 54212 \\
 & MALA  & $-1.00\pm1.09$ & $1041.65\pm22.99$ & 1116
 & $-48.22\pm7.60$ & $52989.74\pm154.92$ & 53598 \\
 & MAFLA & $-10.02\pm1.39$ & $\bf 1197.17\pm23.54$ & \bf 1278
 & $-142.51\pm11.13$ & $\bf 54396.39\pm182.21$ & \bf 55244 \\
\bottomrule
\end{tabular}
\end{table*}

\begin{figure*}[t]
    \centering
    \begin{minipage}{0.3\linewidth}
        \centering
        \includegraphics[width=\linewidth]{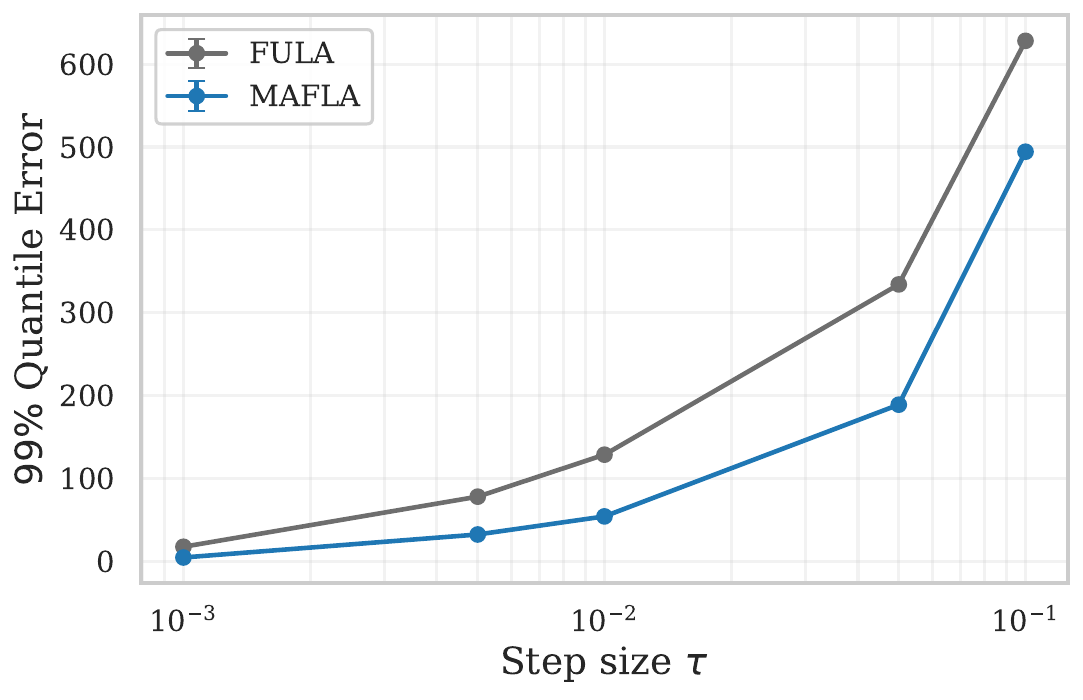}
    \end{minipage}
    \hfill
    \begin{minipage}{0.3\linewidth}
        \centering
        \includegraphics[width=\linewidth]{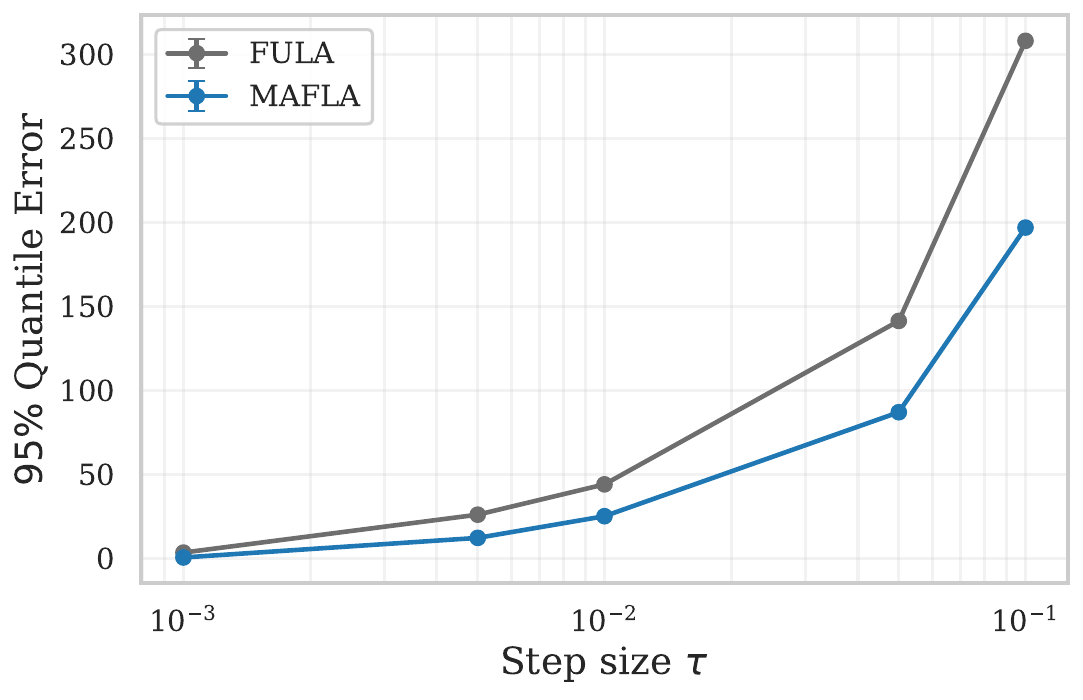}
    \end{minipage}
    \hfill
    \begin{minipage}{0.3\linewidth}
        \centering
        \includegraphics[width=\linewidth]{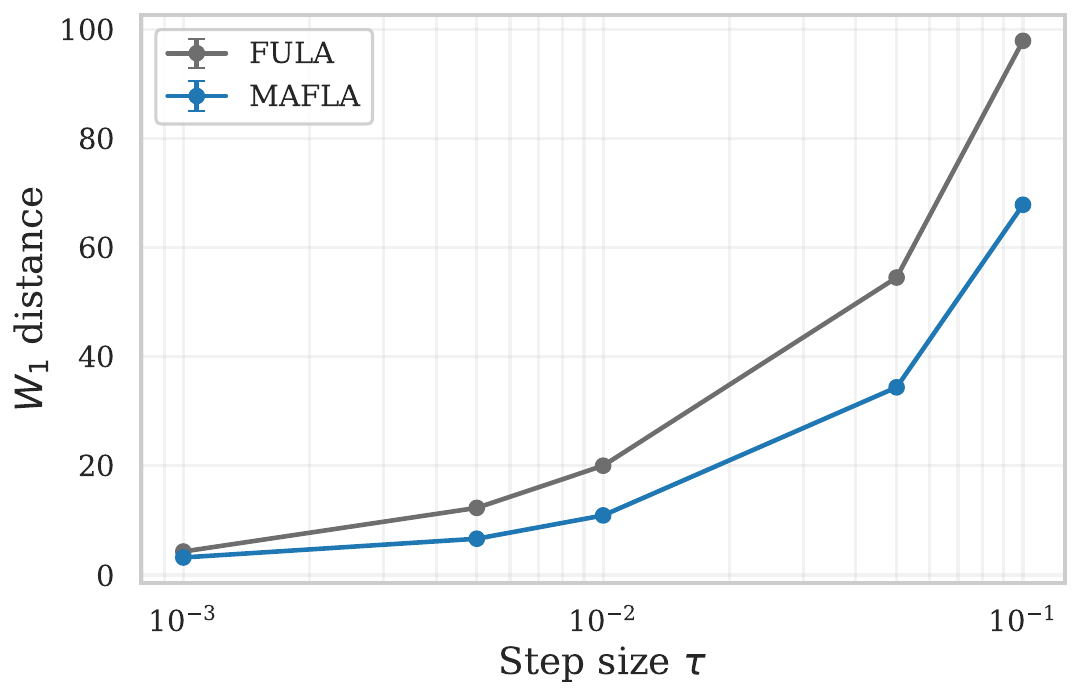}
    \end{minipage}
    \caption{
        Sensitivity to step size~$\tau$ for FULA and MAFLA in a $4$-D $\alpha$-stable mixture, with $\alpha=1.5$.  
        \textbf{Left:} $99\%$ quantile error.  
        \textbf{Center:} $95\%$ quantile error.  
        \textbf{Right:} Wasserstein distance $W_1$.  
        MAFLA exhibits lower errors across all $\tau$ and improves tail behavior.
    }
    \label{fig:tau-ablation}
\end{figure*}

\begin{figure*}[t]
    \centering
    \begin{minipage}{0.3\linewidth}
        \centering
        \includegraphics[width=\linewidth]{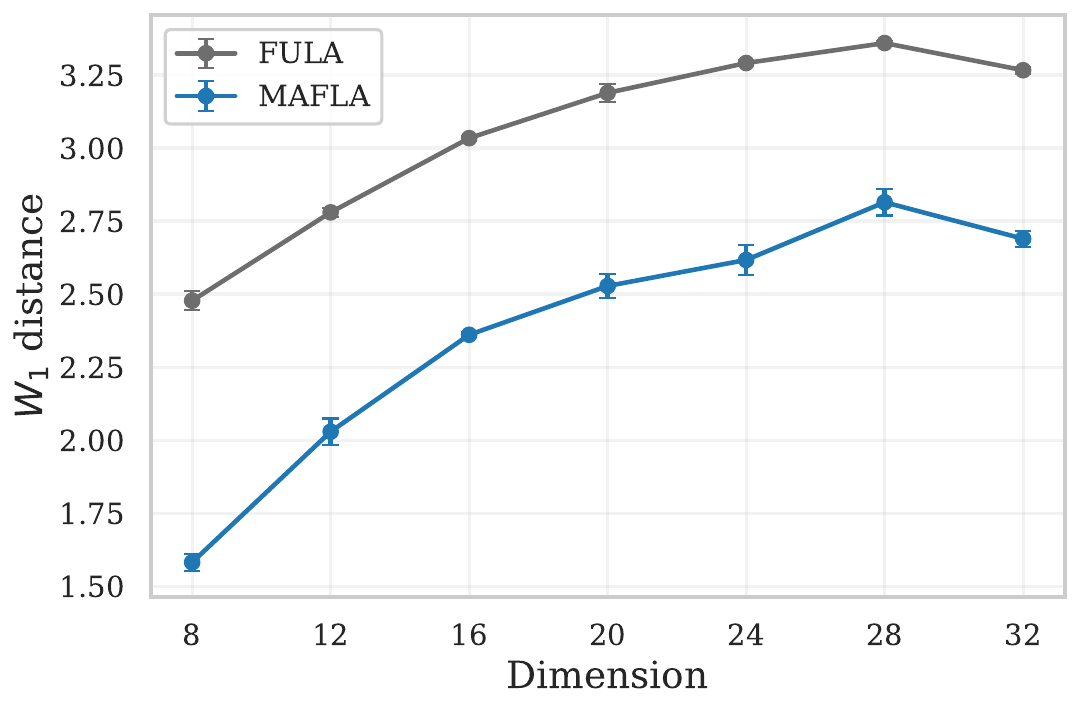}
    \end{minipage}
    \hfill
    \begin{minipage}{0.30\linewidth}
        \centering
        \includegraphics[width=\linewidth]{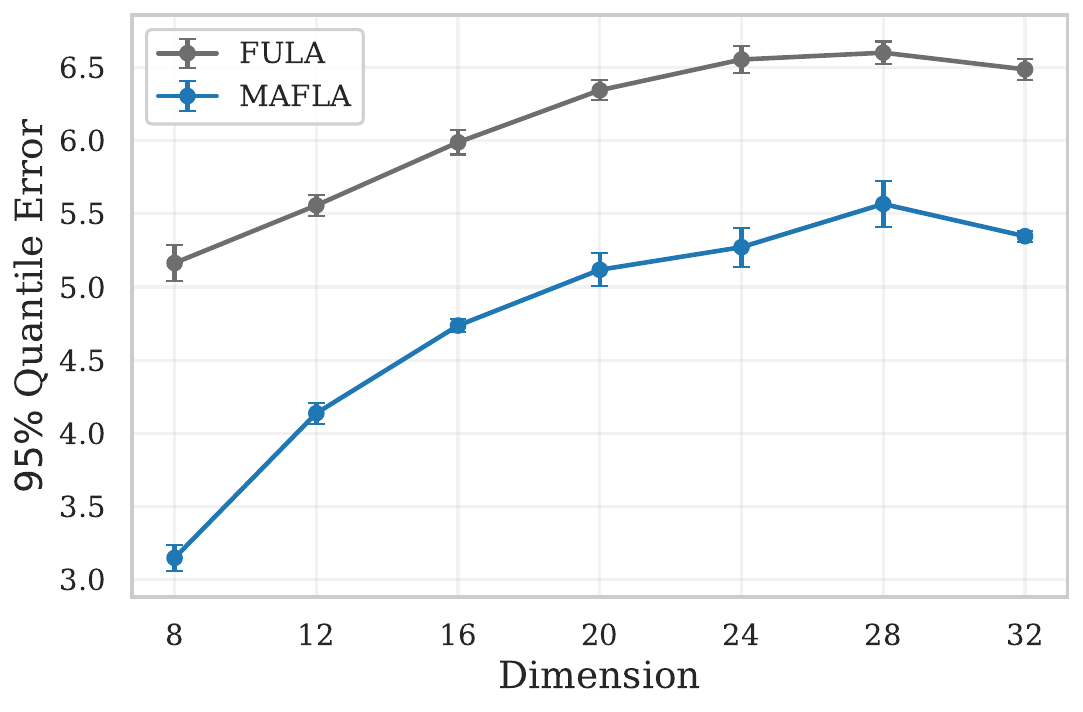}
    \end{minipage}
    \hfill
    \begin{minipage}{0.30\linewidth}
        \centering
        \includegraphics[width=\linewidth]{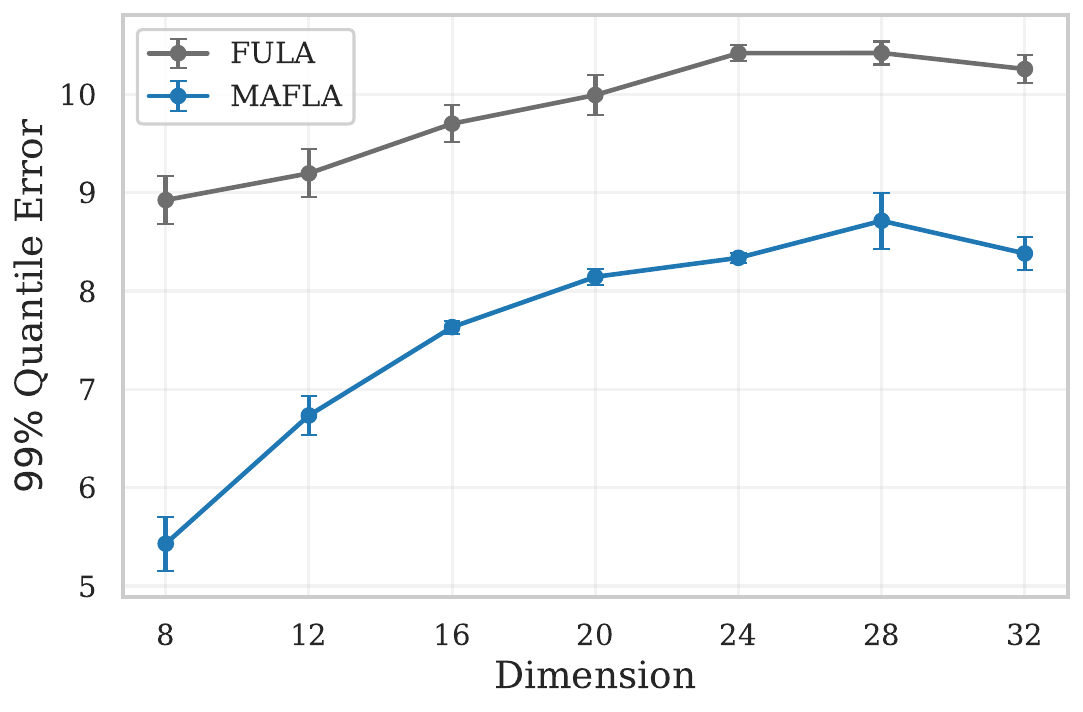}
    \end{minipage}
    \caption{
        Scaling with dimension for FULA and MAFLA on a $d$-dimensional symmetric $\alpha$-stable bimodal mixture with modes at $\pm \mathbf{1}_d$ and mixture weights $0.6/0.4$.
        \textbf{Left:} Wasserstein distance $W_1$.
        \textbf{Center:} $95\%$ quantile error.
        \textbf{Right:} $99\%$ quantile error.
        Curves show mean $\pm$ standard error over three runs.
    }
    \label{fig:dim-ablation}
\end{figure*}

\subsubsection*{Acknowledgments} This work was supported in part by the Air Force Office of Scientific Research under award number FA9550-20-1-0397.

\bibliography{tmlr}

@inproceedings{csimcsekli2017fractional,
  title={Fractional {L}angevin {M}onte {C}arlo: Exploring L{\'e}vy driven stochastic differential equations for markov chain monte carlo},
  author={{\c{S}}im{\c{s}}ekli, Umut},
  booktitle={International Conference on Machine Learning},
  pages={3200--3209},
  year={2017},
  organization={PMLR}
}

@article{yoon2023score,
  title={Score-based generative models with L{\'e}vy processes},
  author={Yoon, Eun Bi and Park, Keehun and Kim, Sungwoong and Lim, Sungbin},
  journal={Advances in Neural Information Processing Systems},
  volume={36},
  pages={40694--40707},
  year={2023}
}

@article{hyvarinen2005estimation,
  title={Estimation of non-normalized statistical models by score matching.},
  author={Hyv{\"a}rinen, Aapo},
  journal={Journal of Machine Learning Research},
  volume={6},
  number={4},
  year={2005}
}

@article{aloui2024score,
  title={Score-based {M}etropolis-{H}astings Algorithms},
  author={Aloui, Ahmed and Hasan, Ali and Dong, Juncheng and Wu, Zihao and Tarokh, Vahid},
  journal={arXiv preprint arXiv:2501.00467},
  year={2024}
}

@article{metropolis1953equation,
  title={Equation of state calculations by fast computing machines},
  author={Metropolis, Nicholas and Rosenbluth, Arianna W and Rosenbluth, Marshall N and Teller, Augusta H and Teller, Edward},
  journal={The journal of chemical physics},
  volume={21},
  number={6},
  pages={1087--1092},
  year={1953},
  publisher={American Institute of Physics}
}

@incollection{robert2004monte,
  title={Monte {C}arlo optimization},
  author={Robert, Christian P and Casella, George},
  booktitle={Monte Carlo Statistical Methods},
  pages={157--204},
  year={2004},
  publisher={Springer}
}

@inproceedings{shariatianheavy,
  title={Heavy-Tailed Diffusion with Denoising {L}{\'e}vy Probabilistic Models},
  author={Shariatian, Dario and Simsekli, Umut and Durmus, Alain Oliviero},
  booktitle={The Thirteenth International Conference on Learning Representations}
}

@inproceedings{salimans2021should,
  title={Should {EBM}s model the energy or the score?},
  author={Salimans, Tim and Ho, Jonathan},
  booktitle={Energy based models workshop-ICLR 2021},
  year={2021}
}

@article{song2019generative,
  title={Generative modeling by estimating gradients of the data distribution},
  author={Song, Yang and Ermon, Stefano},
  journal={Advances in neural information processing systems},
  volume={32},
  year={2019}
}

@article{vincent2011connection,
  title={A connection between score matching and denoising autoencoders},
  author={Vincent, Pascal},
  journal={Neural computation},
  volume={23},
  number={7},
  pages={1661--1674},
  year={2011},
  publisher={MIT Press}
}

@book{nolan2020univariate,
  title={Univariate stable distributions},
  author={Nolan, John P},
  year={2020},
  publisher={Springer}
}

@inproceedings{wang2025fractional,
  title={Fractional {L}angevin Dynamics for Combinatorial Optimization via Polynomial-Time Escape},
  author={Wang, Shiyue and Guo, Ziao and Lu, Changhong and Yan, Junchi},
  booktitle={The Thirty-ninth Annual Conference on Neural Information Processing Systems},
  year={2025}
}

@article{neal2011mcmc,
  title={{MCMC} using Hamiltonian dynamics},
  author={Neal, Radford M and others},
  journal={Handbook of markov chain monte carlo},
  volume={2},
  number={11},
  pages={2},
  year={2011},
  publisher={Chapman and Hall/CRC}
}

@article{meng2021estimating,
  title={Estimating high order gradients of the data distribution by denoising},
  author={Meng, Chenlin and Song, Yang and Li, Wenzhe and Ermon, Stefano},
  journal={Advances in Neural Information Processing Systems},
  volume={34},
  pages={25359--25369},
  year={2021}
}

@inproceedings{song2020sliced,
  title={Sliced score matching: A scalable approach to density and score estimation},
  author={Song, Yang and Garg, Sahaj and Shi, Jiaxin and Ermon, Stefano},
  booktitle={Uncertainty in artificial intelligence},
  pages={574--584},
  year={2020},
  organization={PMLR}
}

@article{song2020score,
  title={Score-based generative modeling through stochastic differential equations},
  author={Song, Yang and Sohl-Dickstein, Jascha and Kingma, Diederik P and Kumar, Abhishek and Ermon, Stefano and Poole, Ben},
  journal={arXiv preprint arXiv:2011.13456},
  year={2020}
}

@article{roberts1996exponential,
  title={Exponential convergence of {L}angevin distributions and their discrete approximations},
  author={Roberts, Gareth O and Tweedie, Richard L},
  year={1996}
}

@article{ortigueira2006riesz,
  title={Riesz potential operators and inverses via fractional centred derivatives},
  author={Ortigueira, Manuel Duarte},
  journal={International Journal of Mathematics and Mathematical Sciences},
  volume={2006},
  number={1},
  pages={048391},
  year={2006},
  publisher={Wiley Online Library}
}

@article{eliazar2003levy,
  title={L{\'e}vy-driven {L}angevin systems: Targeted stochasticity},
  author={Eliazar, Iddo and Klafter, Joseph},
  journal={Journal of statistical physics},
  volume={111},
  number={3},
  pages={739--768},
  year={2003},
  publisher={Springer}
}

@article{panloup2008levy,
author = {Fabien Panloup},
title = {{Recursive computation of the invariant measure of a stochastic differential equation driven by a {L}{\'e}vy process}},
volume = {18},
journal = {The Annals of Applied Probability},
number = {2},
publisher = {Institute of Mathematical Statistics},
pages = {379 -- 426},
year = {2008},
}

@inproceedings{simsekli2020fractional,
  title={Fractional underdamped {L}angevin dynamics: Retargeting sgd with momentum under heavy-tailed gradient noise},
  author={Simsekli, Umut and Zhu, Lingjiong and Teh, Yee Whye and Gurbuzbalaban, Mert},
  booktitle={International conference on machine learning},
  pages={8970--8980},
  year={2020},
  organization={PMLR}
}

@article{shariatian2024denoising,
  title={Denoising {L}{\'e}vy Probabilistic Models},
  author={Shariatian, Dario and Simsekli, Umut and Durmus, Alain},
  journal={arXiv preprint arXiv:2407.18609},
  year={2024}
}

@article{hastings1970monte,
  title={Monte {C}arlo sampling methods using {M}arkov chains and their applications},
  author={Hastings, W Keith},
  year={1970},
  publisher={Oxford University Press}
}

@article{sjoberg2023mcmc,
  title={{MCMC}-Correction of Score-Based Diffusion Models for Model Composition},
  author={Sj{\"o}berg, Anders and Lindqvist, Jakob and {\"O}nnheim, Magnus and Jirstrand, Mats and Svensson, Lennart},
  journal={arXiv preprint arXiv:2307.14012},
  year={2023}
}

@article{riesz1949integrale,
  title={L'int{\'e}grale de {R}iemann-{L}iouville et le probl{\`e}me de {C}auchy},
  author={Riesz, Marcel},
  journal={Acta Mathmatica},
  volume={81},
  number={1},
  pages={1--222},
  year={1949}
}

@article{Mijatovic2014Markov,
  author    = {Aleksandar Mijatovi{\'{c}} and Matija Vidmar and Saul Jacka},
  title     = {Markov Chain Approximations for Transition Densities of {L}{\'{e}}vy Processes},
  journal   = {Electronic Journal of Probability},
  year      = {2014},
  volume    = {19},
  pages     = {1--37},
  doi       = {10.1214/EJP.v19-2208},}

@article{Mijatovic2021LevyManifolds,
  author    = {Aleksandar Mijatovi{\'{c}} and Veno Mramor},
  title     = {L{\'{e}}vy Processes on Smooth Manifolds with a Connection},
  journal   = {Electronic Journal of Probability},
  year      = {2021},
  volume    = {26},
  article   = {132},
  doi       = {10.1214/21-EJP702},
}

@unpublished{Mijatovic2013ScaleFunctions,
  author    = {Aleksandar Mijatovi{\'{c}} and Matija Vidmar and Saul Jacka},
  title     = {Markov Chain Approximations to Scale Functions of {L}{\'{e}}vy Processes},
  note      = {arXiv:1310.1737},
  year      = {2013},}

@article{hodgkinson2021implicit,
  title={Implicit {L}angevin algorithms for sampling from log-concave densities},
  author={Hodgkinson, Liam and Salomone, Robert and Roosta, Fred},
  journal={Journal of Machine Learning Research},
  volume={22},
  number={136},
  pages={1--30},
  year={2021}
}

@book{bertoin1996levy,
  title={L{\'e}vy processes},
  author={Bertoin, Jean},
  volume={121},
  year={1996},
  publisher={Cambridge university press Cambridge}
}

@incollection{sato2001basic,
  title={Basic results on {L}{\'e}vy processes},
  author={Sato, Ken-iti},
  booktitle={L{\'e}vy processes: theory and applications},
  pages={3--37},
  year={2001},
  publisher={Springer}
}
\bibliographystyle{tmlr}

\appendix
\section{Appendix}

\subsection{MAFLA}
Here we include the sequential version of the MAFLA algorithm (Alg.~\ref{alg:smafla}). While it corresponds to the standard Metropolis–Hastings algorithm, in higher dimensions the multi-particle version is more stable and more scalable, as the different particles can run in parallel and we avoid the need for large burn-in and lag parameters that would significantly slow down the sampling procedure. Moreover, the parallel version allows for a fair comparison with FULA, since both methods are run for the same number of steps with the same number of particles.

\begin{algorithm}[h]
\caption{Score-based Metropolis-Adjusted Fractional Langevin Algorithm (MAFLA)}
\label{alg:smafla}
\begin{algorithmic}[1]
\Require Initial state $x_0\in\mathbb{R}^d$, step size $\tau>0$, stability index $\alpha\in(1,2]$, number of iterations $T$, score network $s_\theta(x)$, acceptance network $a_\phi(x',x)$.
\State Compute the scaling constant $c_\alpha = \Gamma(\alpha - 1)/\Gamma(\alpha/2)^2$.
\For{$t=0$ to $T-1$}
    \State \textbf{Drift computation:} $\tilde b(x_t) \gets c_\alpha\,s_\theta(x_t)$.
    \State \textbf{Sample L\'evy noise:} draw $\xi_t \sim \mathcal{S}\alpha\mathcal{S}(1)$.
    \State \textbf{Propose:} $x' \gets x_t + \tau\,\tilde b(x_t) + \tau^{1/\alpha}\xi_t$.
    \State \textbf{Acceptance probability:} $\alpha_t \gets a_\phi(x',x_t)$.
    \State Draw $u\sim\mathrm{Unif}(0,1)$.
    \If{$u < \alpha_t$}
        \State $x_{t+1} \gets x'$\Comment{accept}
    \Else
        \State $x_{t+1} \gets x_t$\Comment{reject}
    \EndIf
\EndFor
\State \Return $(x_t)_{t=0}^T$.
\end{algorithmic}
\end{algorithm}


\subsection{Score Balance Matching}
\label{sec:appendix-grad-detailed-balance}
In this section, we recall a lemma from \citet{aloui2024score} showing that the gradient form of the detailed balance condition introduced in
Section~\ref{sec:grad-detailed-balance} is equivalent to the standard
detailed balance equation.

\begin{lemma}[Gradient Detailed Balance Equivalence]
\label{lem:grad-detailed-balance}
Let $\mathcal{X}\subseteq\mathbb{R}^d$ and assume that the target density
$p:\mathcal{X}\to\mathbb{R}_{+}$, the proposal density
$q(\cdot\mid x):\mathcal{X}\to\mathbb{R}_{+}$, and the acceptance function
$a:\mathcal{X}\times\mathcal{X}\to(0,1]$ are differentiable.
Then the following statements are equivalent for all $x,x'\in\mathcal{X}$:
\begin{itemize}
    \item[(i)] \emph{(Detailed balance)}
    \[
    p(x)\, q(x' \mid x)\, a(x, x')
    =
    p(x')\, q(x \mid x')\, a(x', x).
    \]

    \item[(ii)] \emph{(Gradient detailed balance)}
    \[
    \nabla \log p(x)
    + \nabla \log q(x' \mid x)
    + \nabla \log a(x,x')
    =
    \nabla \log p(x')
    + \nabla \log q(x \mid x')
    + \nabla \log a(x',x),
    \]
\end{itemize}
where \(\nabla=(\nabla_x,\nabla_{x'})\).
\end{lemma}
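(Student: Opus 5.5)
The plan is to reduce both statements to properties of one scalar function on $\mathcal{X}\times\mathcal{X}$. Since $a$ takes values in $(0,1]$, and assuming $p$ and $q(\cdot\mid\cdot)$ are strictly positive so that their logarithms are defined and differentiable, I would define
\[
F(x,x') := \log p(x) + \log q(x'\mid x) + \log a(x,x') - \log p(x') - \log q(x\mid x') - \log a(x',x).
\]
Because $\log$ is injective on $(0,\infty)$, statement (i) is exactly $F\equiv 0$ on $\mathcal{X}\times\mathcal{X}$. Statement (ii) is exactly $\nabla F \equiv 0$ with $\nabla=(\nabla_x,\nabla_{x'})$. The lemma therefore becomes: $F\equiv 0 \iff \nabla F\equiv 0$.

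The direction (i)$\Rightarrow$(ii) is immediate. If $F$ vanishes identically on the (open) domain, then its gradient vanishes, and expanding $\nabla F=0$ term by term gives (ii).

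For (ii)$\Rightarrow$(i), I would first use $\nabla F\equiv 0$ to conclude that $F$ is constant, say $F\equiv C$. This uses the mean value theorem along paths in $\mathcal{X}\times\mathcal{X}$, so it requires the domain to be connected, for instance $\mathcal{X}$ open and connected (or convex). It then remains to show $C=0$. The key observation is that $F$ is antisymmetric: swapping the roles of $x$ and $x'$ in the definition gives $F(x',x)=-F(x,x')$. Hence $C=-C$, so $C=0$. Equivalently, one can evaluate on the diagonal, where $F(x,x)=0$ because every term cancels with its partner. Either way $F\equiv 0$, which is (i).

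The main obstacle is not computational but concerns the hypotheses. The step ``zero gradient implies constant'' needs $\mathcal{X}\times\mathcal{X}$ to be connected and the relevant functions to be differentiable on an open set. It also needs strict positivity of $p$ and $q$ so that the logarithms exist; otherwise the argument only holds on the set where $p(x)q(x'\mid x)>0$, and each connected component of that set must be handled separately. I would state these standing assumptions explicitly: $\mathcal{X}$ open and connected, and $p,q>0$. The diagonal (or antisymmetry) argument then fixes the integration constant on each component, since each component is invariant under the swap $(x,x')\mapsto(x',x)$, or contains diagonal points in the full-support case.
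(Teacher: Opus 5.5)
Your proof is correct and is the standard argument. The paper gives no proof of its own here (it only recalls the lemma from Aloui et al.), and your route matches the integrate-to-a-constant step the paper uses in its symmetric-proposal appendix: differentiate the log identity for (i)$\Rightarrow$(ii), and for (ii)$\Rightarrow$(i) integrate $\nabla F\equiv 0$ to a constant, which vanishes by the antisymmetry $F(x',x)=-F(x,x')$ or because $F(x,x)=0$.

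The hypotheses you add are genuinely necessary, not cosmetic. Take $\mathcal{X}=(0,1)\cup(2,3)$, constant $p$ and $q$, and $a=1/2$ on $(0,1)\times(2,3)$ with $a=1$ elsewhere; then (ii) holds but (i) fails.

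One side claim is false in general: that every connected component of a partial-support set is invariant under the swap. For $d=1$ the set $\{|x-x'|>1\}$ has two components exchanged by the swap, and $F$ can equal $C$ and $-C$ on them with $C\neq 0$. This does not affect your proof under your standing assumptions.
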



\subsection{Symmetric Proposal: Score-based Fractional Random Walk}
For completeness we include a discussion about symmetric proposals, e.g., fractional random walk, we prove an equivalence result showing that training any acceptance function (any function that verifies the detailed balance condition) is equivalent to training an energy function. We then confirm empirically that training an energy model directly (with score matching) and performing the MH correction with the learned energy (as the proposal is not needed) has equivalent performance to training a score model and then an acceptance function.

\subsubsection{Equivalence Result}
We begin from the log detailed balance condition, for $a >0 $,
\[
\log \frac{a(x',x)}{a(x,x')} \;=\; \log p(x') - \log p(x) + \log q(x\mid x') - \log q(x'\mid x).
\]
When the proposal is symmetric (e.g.\ Random Walk Metropolis--Hastings), the ratio simplifies to
\[
\log \frac{a(x',x)}{a(x,x')} \;=\; \log p(x') - \log p(x).
\]
We show that any acceptance function satisfying this identity admits a decomposition of the form
\[
a(x',x) \;=\; f(x)\, g(x')\, r(x',x),
\]
where \(f,g:\mathcal{X}\to\mathbb{R}_+\) and \(r:\mathcal{X}\times\mathcal{X}\to\mathbb{R}_+\) is symmetric, i.e.\ \(r(x',x)=r(x,x')\).

Let
\[
\psi(x,x')=\log \frac{a(x',x)}{a(x,x')},\qquad 
h_1(x)=-\log p(x),\quad h_2(x')=\log p(x').
\]
Then the gradient form of detailed balance is
\[
\nabla \psi(x,x')=\nabla h_1(x)+\nabla h_2(x'),
\]
which integrates to
\[
\psi(x,x') = h_1(x)+h_2(x')+C
\]
for some constant \(C\). Writing \(\alpha=e^C\),
\[
\frac{a(x',x)}{a(x,x')} = \alpha\, e^{h_1(x)} e^{h_2(x')}.
\]

We now establish the decomposition formally.

\begin{lemma}\label{lemma:decomposition}
For \(a:\mathcal{X}\times\mathcal{X}\to(0,1]\), the following are equivalent:
\begin{enumerate}
    \item[(i)] There exist functions \(\tilde h_1,\tilde h_2:\mathcal{X}\to\mathbb{R}\) such that
    \[
    \frac{a(x',x)}{a(x,x')} = \exp(\tilde h_1(x))\,\exp(\tilde h_2(x')).
    \]
    \item[(ii)] There exist \(f,g:\mathcal{X}\to\mathbb{R}_+\) and a symmetric \(r:\mathcal{X}\times\mathcal{X}\to\mathbb{R}_+\) such that
    \[
    a(x',x)=f(x)\, g(x')\, r(x',x).
    \]
\end{enumerate}
\end{lemma}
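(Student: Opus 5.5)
The plan is to prove the two implications separately. The direction (ii)$\Rightarrow$(i) is a direct substitution. The direction (i)$\Rightarrow$(ii) relies on the antisymmetry of the log-ratio $\log\bigl(a(x',x)/a(x,x')\bigr)$ under swapping $x\leftrightarrow x'$. Throughout, $a$ takes values in $(0,1]$, so every ratio and logarithm below is well defined.

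\emph{(ii)$\Rightarrow$(i).} Assume $a(x',x)=f(x)\,g(x')\,r(x',x)$ with $r$ symmetric and $f,g>0$. Dividing by $a(x,x')=f(x')\,g(x)\,r(x,x')$, the symmetric factor $r$ cancels. This leaves
\[
\frac{a(x',x)}{a(x,x')}=\frac{f(x)}{g(x)}\cdot\frac{g(x')}{f(x')} .
\]
So (i) holds with $\tilde h_1=\log f-\log g$ and $\tilde h_2=\log g-\log f$.

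\emph{(i)$\Rightarrow$(ii).} Set $R(x,x')=a(x',x)/a(x,x')$. By definition $R(x',x)=1/R(x,x')$, so (i) gives
\[
\tilde h_1(x)+\tilde h_2(x')=-\tilde h_1(x')-\tilde h_2(x)\qquad\text{for all }x,x'.
\]
Taking $x=x'$ yields $\tilde h_1(x)+\tilde h_2(x)=0$ pointwise. Hence $\tilde h_2=-\tilde h_1=:h$, and any additive constant in (i) is automatically forced to vanish. It follows that $a(x',x)/a(x,x')=e^{h(x')-h(x)}$. I expect this antisymmetry step to be the main point, since it is what removes the apparent freedom in the pair $(\tilde h_1,\tilde h_2)$.

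Next I split $h$ symmetrically and define
\[
f(x)=e^{-h(x)/2},\qquad g(x')=e^{h(x')/2},\qquad r(x',x)=a(x',x)\,e^{(h(x)-h(x'))/2}.
\]
By construction $a(x',x)=f(x)\,g(x')\,r(x',x)$, and $f,g,r>0$. It remains to check that $r$ is symmetric. Using $a(x,x')=a(x',x)\,e^{h(x)-h(x')}$, we get
\[
r(x,x')=a(x,x')\,e^{(h(x')-h(x))/2}=a(x',x)\,e^{(h(x)-h(x'))/2}=r(x',x).
\]
This completes the equivalence.
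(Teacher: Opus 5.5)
Your proof is correct and is essentially the paper's construction. Your $f=e^{-h/2}=e^{\tilde h_1/2}$ and $g=e^{h/2}=e^{\tilde h_2/2}$ are the paper's choices, and your $r(x',x)=a(x',x)e^{(h(x)-h(x'))/2}$ equals the paper's $\sqrt{a(x',x)\,a(x,x')}$, which is manifestly symmetric, so your preliminary antisymmetry step is harmless but not needed. In (ii)$\Rightarrow$(i), your choice $\tilde h_1=\log f-\log g$, $\tilde h_2=\log g-\log f$ is the right one, whereas the paper's $\tilde h_1=\log f$, $\tilde h_2=\log g$ does not actually reproduce the ratio $f(x)g(x')/(f(x')g(x))$.
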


\begin{proof}
\textbf{(i) $\Rightarrow$ (ii).}
Define
\[
r(x',x)=\sqrt{a(x',x)\,a(x,x')},\qquad 
f(x)=e^{\tilde h_1(x)/2},\quad g(x')=e^{\tilde h_2(x')/2}.
\]
A direct calculation shows
\[
a(x',x)=f(x)\,g(x')\,r(x',x).
\]

\textbf{(ii) $\Rightarrow$ (i).}
Define \(\tilde h_1(x)=\log f(x)\) and \(\tilde h_2(x')=\log g(x')\).  
Symmetry of \(r\) yields
\[
\frac{a(x',x)}{a(x,x')}
    =\frac{f(x)g(x')}{f(x')g(x)}
    =\exp(\tilde h_1(x))\,\exp(\tilde h_2(x')),
\]
which gives (i).
\end{proof}

Applying Lemma~\ref{lemma:decomposition} to the detailed balance identity yields
\[
\log\frac{f(x)g(x')}{f(x')g(x)}=\log p(x')-\log p(x).
\]
Taking gradients with respect to \(x\) and \(x'\),
\[
\nabla_x\bigl(\log g(x)-\log f(x)\bigr)=\nabla_x \log p(x),\qquad
\nabla_{x'}\bigl(\log g(x')-\log f(x')\bigr)=\nabla_{x'} \log p(x').
\]
Thus, \(\log g - \log f\) recovers the target energy up to constants, while the symmetric factor \(r\) controls the residual flexibility of the acceptance rule.

This decomposition motivates parameterizing the acceptance function with three neural networks \(f\), \(g\), and a symmetric \(r\), allowing the model to learn the target energy through \(\log f - \log g\) while ensuring valid acceptance probabilities through the symmetric component \(r\).

We therefore summarize a class of fractional-proposal Metropolis–Hastings (MH) algorithms (Fig.~\ref{fig:fractional-mh-scheme}). If the proposal is symmetric, training an energy model is mathematically equivalent to training a score model together with an acceptance function, enabling MH correction. However, for non-symmetric fractional proposals (e.g., Fractional Langevin, Fractional Hamiltonian Monte Carlo), the proposal density is not tractable, and training an energy model alone is not sufficient to perform MH correction.

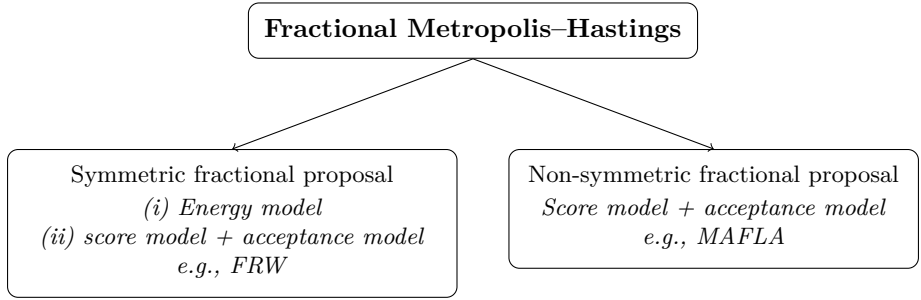
\begin{figure}[h]
    \centering
    \begin{tikzpicture}[
        node distance=1.0cm and 0.2cm,
        every node/.style={font=\small},
        box/.style={
            draw,
            rounded corners,
            align=center,
            minimum width=4.2cm,
            inner sep=6pt
        },
        level/.style={sibling distance=6cm}
    ]

    \node[box, font=\bfseries] (fmh) {Fractional Metropolis--Hastings};

\node[box, below=1.2cm of fmh, xshift=-3.2cm] (sym) {Symmetric fractional proposal\\[2pt]
    \begin{tabular}{c}
    \emph{(i) Energy model}\\
    \emph{(ii) score model + acceptance model}\\
    \emph{e.g., FRW}
    \end{tabular}
};

\node[box, below=1.2cm of fmh, xshift=3.2cm] (nonsym) {Non-symmetric fractional proposal\\[2pt]
    \begin{tabular}{c}
    \emph{Score model + acceptance model}\\
    \emph{e.g., MAFLA}
    \end{tabular}
};

    \draw[->] (fmh.south) -- (sym.north);
    \draw[->] (fmh.south) -- (nonsym.north);

    \end{tikzpicture}
    \caption{Schematic overview of Fractional Metropolis--Hastings Algorithms.}
    \label{fig:fractional-mh-scheme}
\end{figure}

\subsection{Numerical Approximation of the Fractional Drift}
\label{app:drift_approx}

\subsubsection{Riesz Fractional Derivative Approximation}

The fractional drift in our Langevin dynamics is governed by the Riesz fractional derivative of order $\gamma = \alpha - 2$, defined as $\mathcal{D}^{\gamma}f(x) = \mathcal{F}^{-1}\{|\omega|^{\gamma}\hat{f}(\omega)\}$. Due to the non-local nature of this operator, exact evaluation is intractable. To implement this in the Metropolis-Adjusted Fractional Langevin Algorithm (MAFLA), we employ a discretized approximation based on fractional centered differences.

Following \cite{ortigueira2006riesz} and \cite{csimcsekli2017fractional}, the Riesz derivative can be approximated by the limit of the centered difference operator $\Delta_h^\gamma$:
\begin{equation}
    \mathcal{D}^{\gamma}f(x) = \lim_{h \to 0} \frac{1}{h^\gamma} \sum_{k=-\infty}^{\infty} g_{\gamma,k} f(x - kh),
\end{equation}
where $h > 0$ is the discretization step size. The coefficients $g_{\gamma,k}$ are given by the ratio of Gamma functions:
\begin{equation}
    g_{\gamma,k} = (-1)^k \frac{\Gamma(\gamma + 1)}{\Gamma(\frac{\gamma}{2} - k + 1)\Gamma(\frac{\gamma}{2} + k + 1)}.
\end{equation}
For computational tractability, we truncate the infinite summation to a finite window size $K \in \mathbb{N}$, resulting in the \textit{truncated fractional centered difference operator} $\Delta_{h,K}^\gamma$:
\begin{equation}
    \Delta_{h,K}^{\gamma}f(x) := \frac{1}{h^\gamma} \sum_{k=-K}^{K} g_{\gamma,k} f(x - kh).
\end{equation}

\subsubsection{Score-Based Implementation}

In the context of score-based modeling, we do not have access to the unnormalized density $f_\pi(x) = p(x)$. The fractional drift requires the term $\mathcal{D}^{\alpha-2}(p(x) \nabla \log p(x)) / p(x)$. Applying the discrete operator to the density-weighted score $p(x)s(x)$, we obtain:
\begin{equation}
    \tilde{b}_{h,K}(x) \approx \frac{1}{h^{\alpha-2}} \sum_{k=-K}^{K} g_{\alpha-2, k} \frac{p(x-kh)}{p(x)} s_\theta(x-kh),
    \label{eq:score_approx}
\end{equation}
where $s_\theta(x) \approx \nabla \log p(x)$ is the learned score network. The density ratio $\frac{p(x-kh)}{p(x)}$ is estimated by integrating the score function along the path from $x$ to $x-kh$, approximated via the trapezoidal rule.

\subsubsection{Error Analysis and Hyperparameters}

The accuracy of this approximation depends on the interplay between the step size $h$ and the truncation order $K$. Assuming sufficient regularity of the target density, the approximation error is bounded by:
\begin{equation}
    \left| \mathcal{D}^{\gamma}f(x) - \Delta_{h,K}^{\gamma}f(x) \right| = \mathcal{O}\left(h^2 + \frac{1}{hK}\right).
\end{equation}
This bound highlights a critical trade-off:
\begin{itemize}
    \item \textbf{Discretization Error ($h^2$):} Reduces as $h \to 0$, favoring smaller step sizes for local accuracy.
    \item \textbf{Truncation Error ($(hK)^{-1}$):} Increases as $h \to 0$ if $K$ is fixed. This term reflects the heavy-tailed nature of the operator; as the grid becomes finer (smaller $h$), the window $K$ must effectively grow to capture the non-local interactions of the tails.
\end{itemize}
\subsubsection{Ablation Study: Approximation Order \texorpdfstring{$K$}{K} and Drift Step \texorpdfstring{$h$}{h}}
\label{sec:ablation_kh}

While we utilized the robust zeroth-order drift approximation ($K=0$) for the main results, the proposed framework allows for higher-order Riesz derivative approximations ($K \ge 1$) as defined in equation~\eqref{eq:score_approx}. In this section, we investigate the sensitivity of S-MAFLA to the truncation order $K$ and the discretization step size $h$, seeking to characterize the trade-off between approximation fidelity and stability.

We evaluated the sampler on $\alpha$-stable targets with $\alpha \in \{1.2, 1.5, 1.8\}$ and mean $\mu=5.0$, varying $K \in \{0, 1, 3, 5\}$ and $h \in [10^{-3}, 10^{-1}]$. The results, summarized in Figure~\ref{fig:heatmaps}, reveal two key insights:

\begin{itemize}
    \item \textbf{Performance Ceiling of Higher-Order Approximations:} When the discretization step $h$ is finely tuned, higher-order approximations can significantly outperform the robust $K=0$ baseline. For example, in the heavy-tailed regime ($\alpha=1.2$), increasing the order to $K=1$ with $h=10^{-2}$ reduces the Wasserstein-1 error to $3.17$, an improvement over the $4.16$ baseline achieved by $K=0$. Similarly, for $\alpha=1.8$, a fine-tuned model ($K=5, h=10^{-1}$) achieves a Wasserstein-1 distance of $0.81$, almost halving the error of the zeroth-order model ($1.45$).
    
    \item \textbf{Robustness-Accuracy Trade-off:} While higher $K$ values offer a higher performance ceiling, they exhibit greater sensitivity to the choice of $h$. As illustrated in the heatmaps, the performance of $K \ge 1$ degrades rapidly if $h$ deviates from the optimal range (e.g., for $\alpha=1.2$, the error for $K=1$ spikes to $6.73$ at $h=0.03$). In contrast, the $K=0$ configuration remains stable across a wide spectrum of hyperparameters, consistently outperforming the unadjusted FULA baseline without requiring extensive tuning.
\end{itemize}

These findings suggest a practical \textit{training-inference asymmetry}: while the robust $K=0$ drift suffices for stable proposal generation in general settings, domain-specific fine-tuning of the approximation physics ($K, h$) allows practitioners to maximize sampling accuracy when ground-truth validation is feasible.

\begin{figure}[t!]
    \centering
    \begin{subfigure}[b]{0.32\textwidth}
        \centering
        \includegraphics[width=\textwidth]{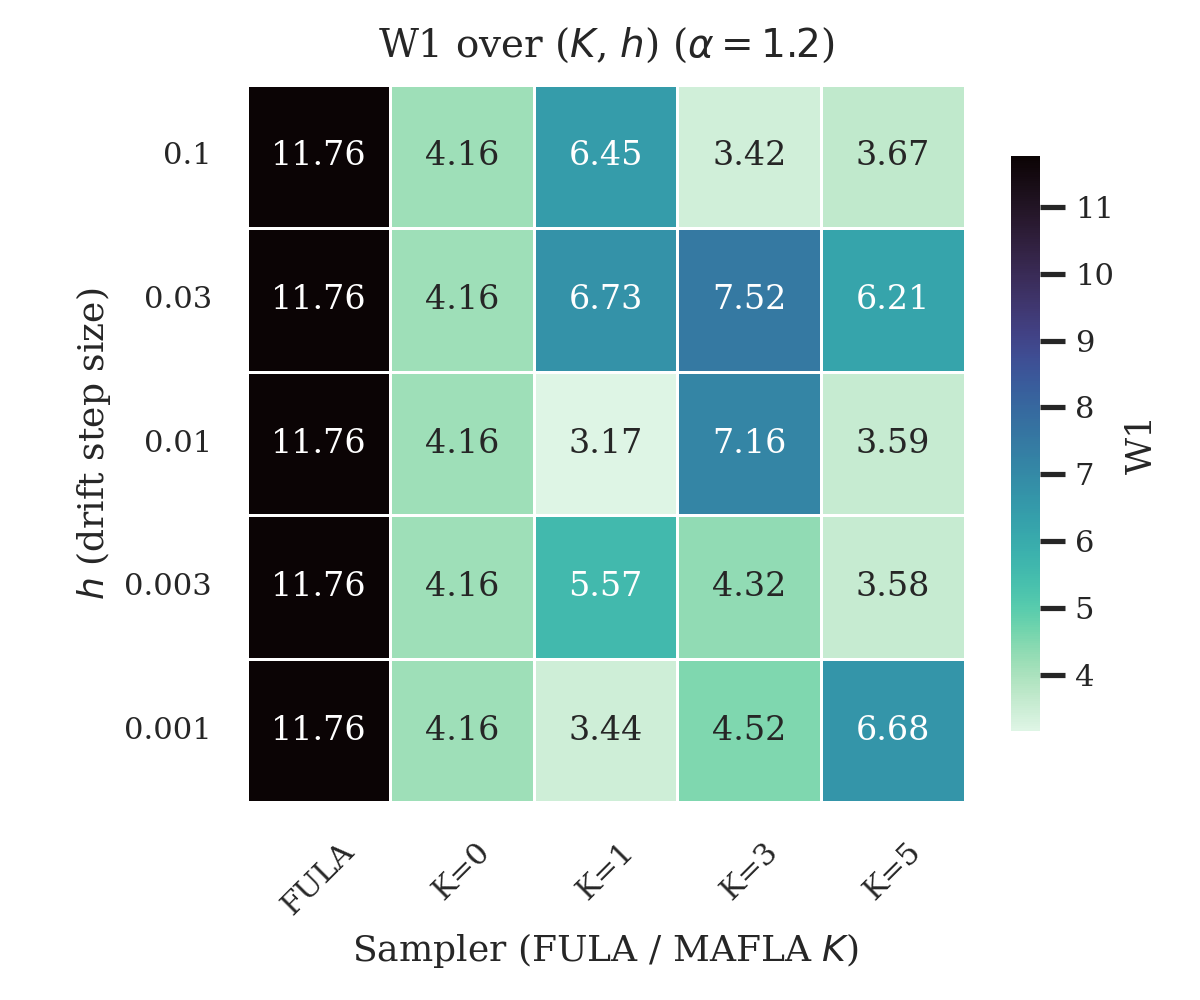}
        \caption{$W_1$ ($\alpha=1.2$)}
    \end{subfigure}
    \hfill
    \begin{subfigure}[b]{0.32\textwidth}
        \centering
        \includegraphics[width=\textwidth]{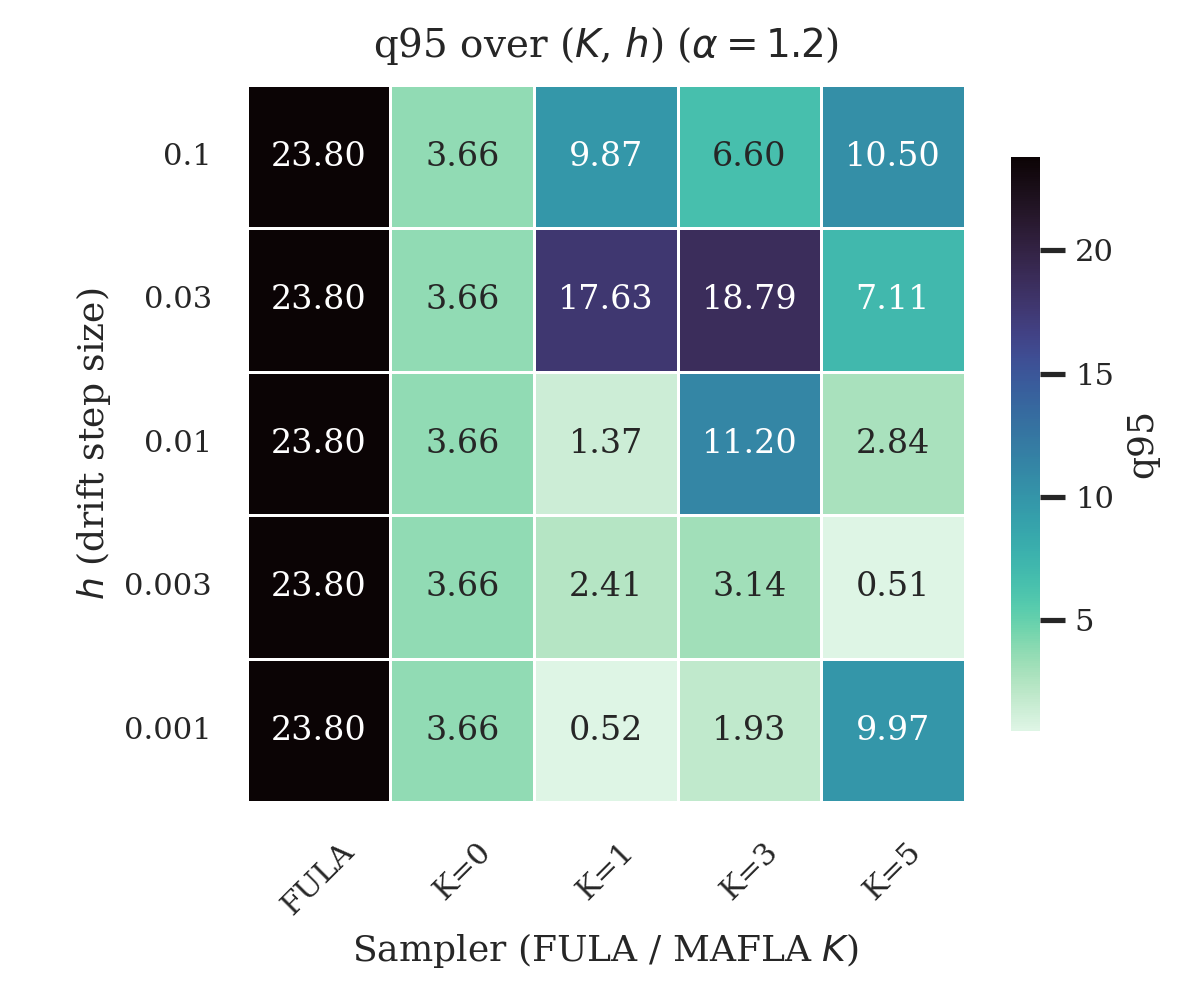}
        \caption{$q_{95}$ ($\alpha=1.2$)}
    \end{subfigure}
    \hfill
    \begin{subfigure}[b]{0.32\textwidth}
        \centering
        \includegraphics[width=\textwidth]{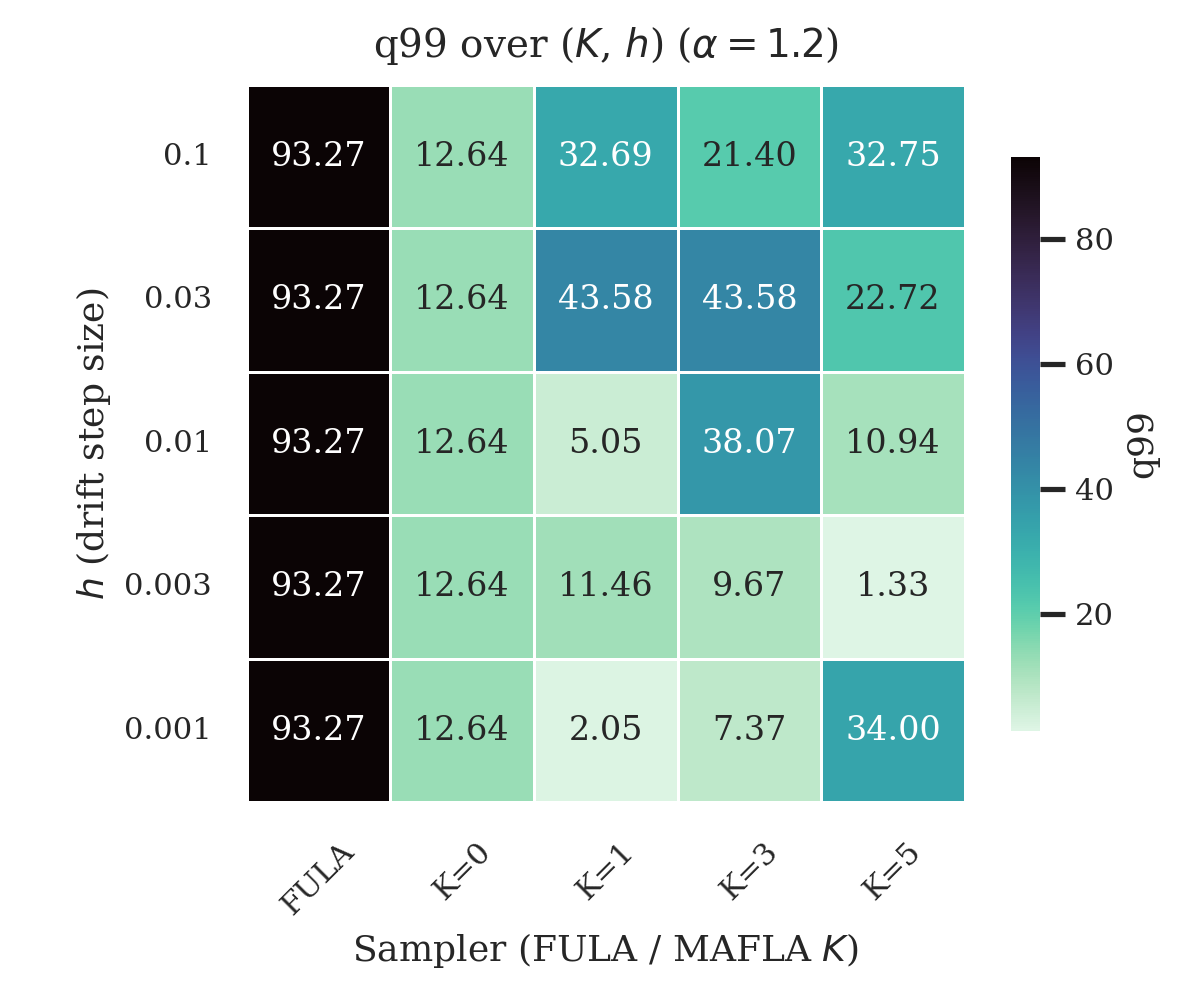}
        \caption{$q_{99}$ ($\alpha=1.2$)}
    \end{subfigure}
    
    \vspace{0.5em} 
    
    \begin{subfigure}[b]{0.32\textwidth}
        \centering
        \includegraphics[width=\textwidth]{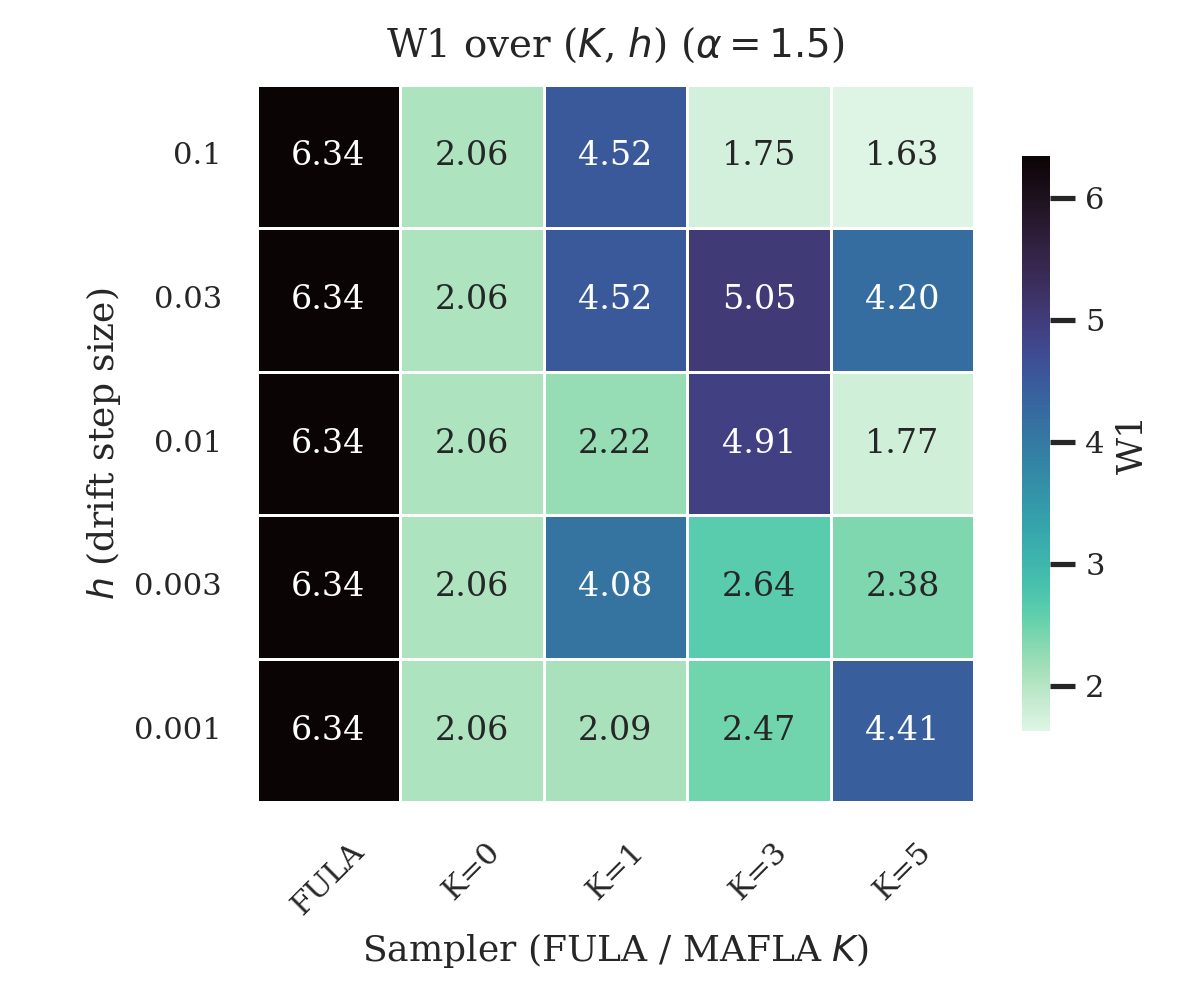}
        \caption{$W_1$ ($\alpha=1.5$)}
    \end{subfigure}
    \hfill
    \begin{subfigure}[b]{0.32\textwidth}
        \centering
        \includegraphics[width=\textwidth]{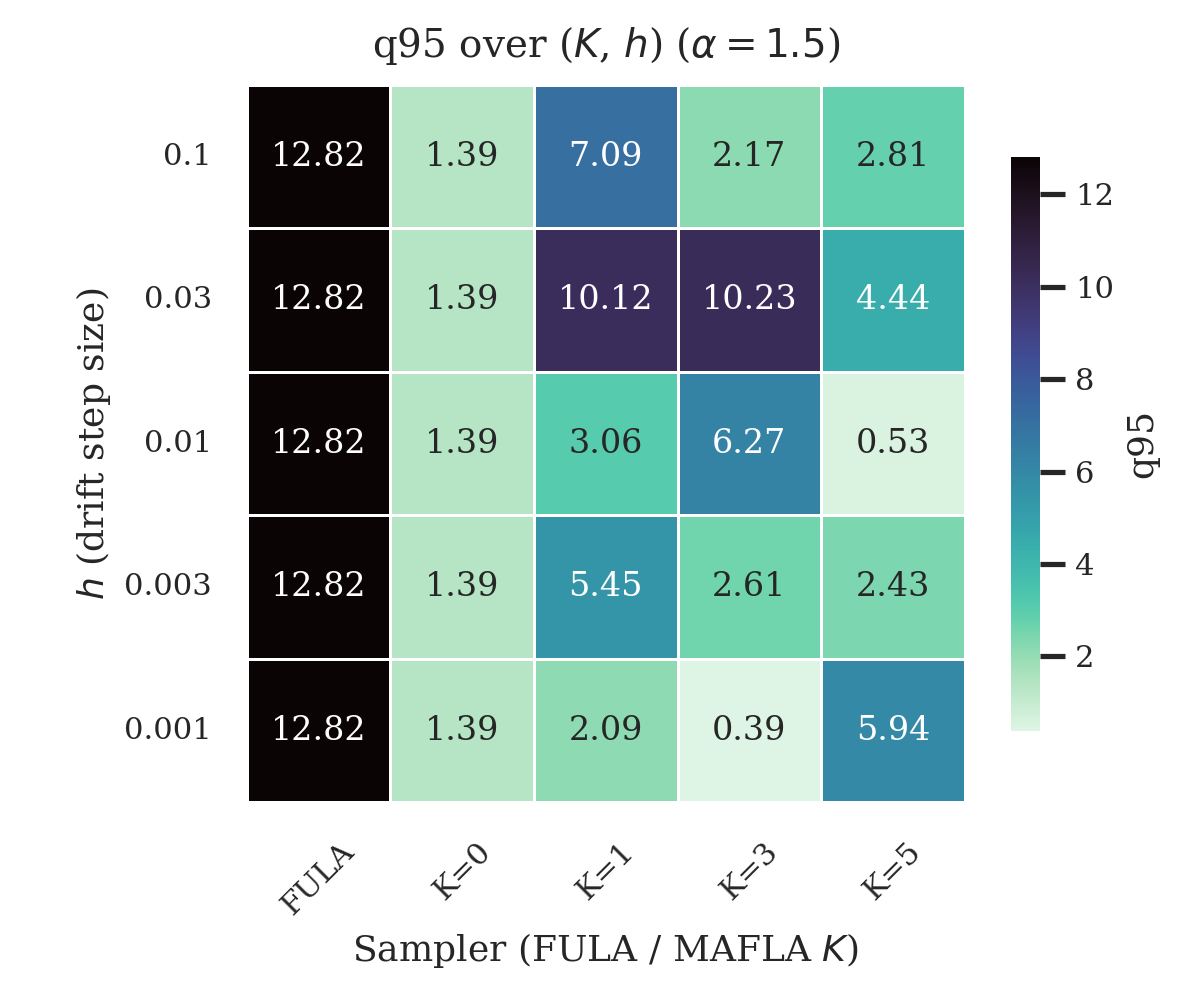}
        \caption{$q_{95}$ ($\alpha=1.5$)}
    \end{subfigure}
    \hfill
    \begin{subfigure}[b]{0.32\textwidth}
        \centering
        \includegraphics[width=\textwidth]{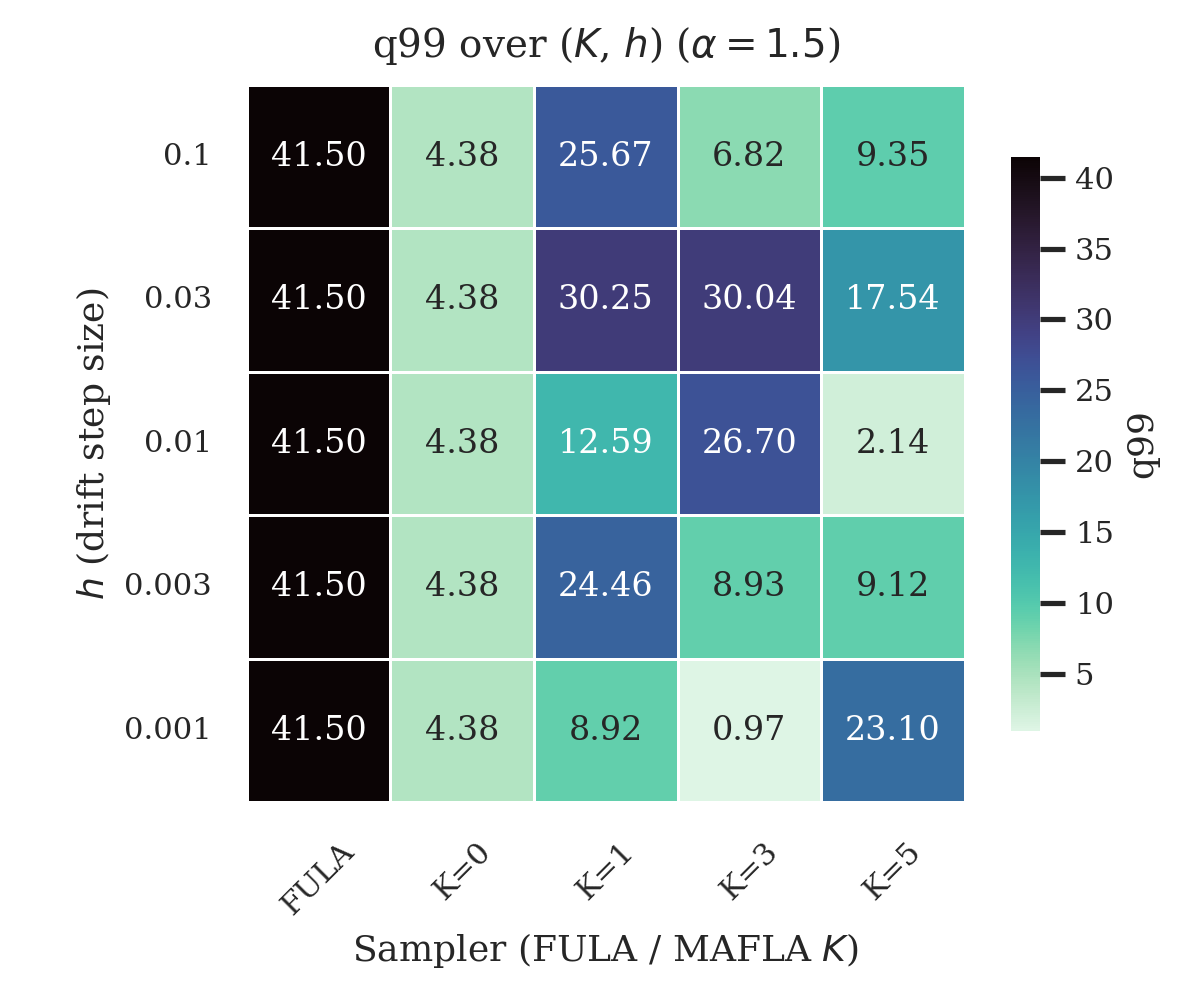}
        \caption{$q_{99}$ ($\alpha=1.5$)}
    \end{subfigure}
    
    \vspace{0.5em} 

    \begin{subfigure}[b]{0.32\textwidth}
        \centering
        \includegraphics[width=\textwidth]{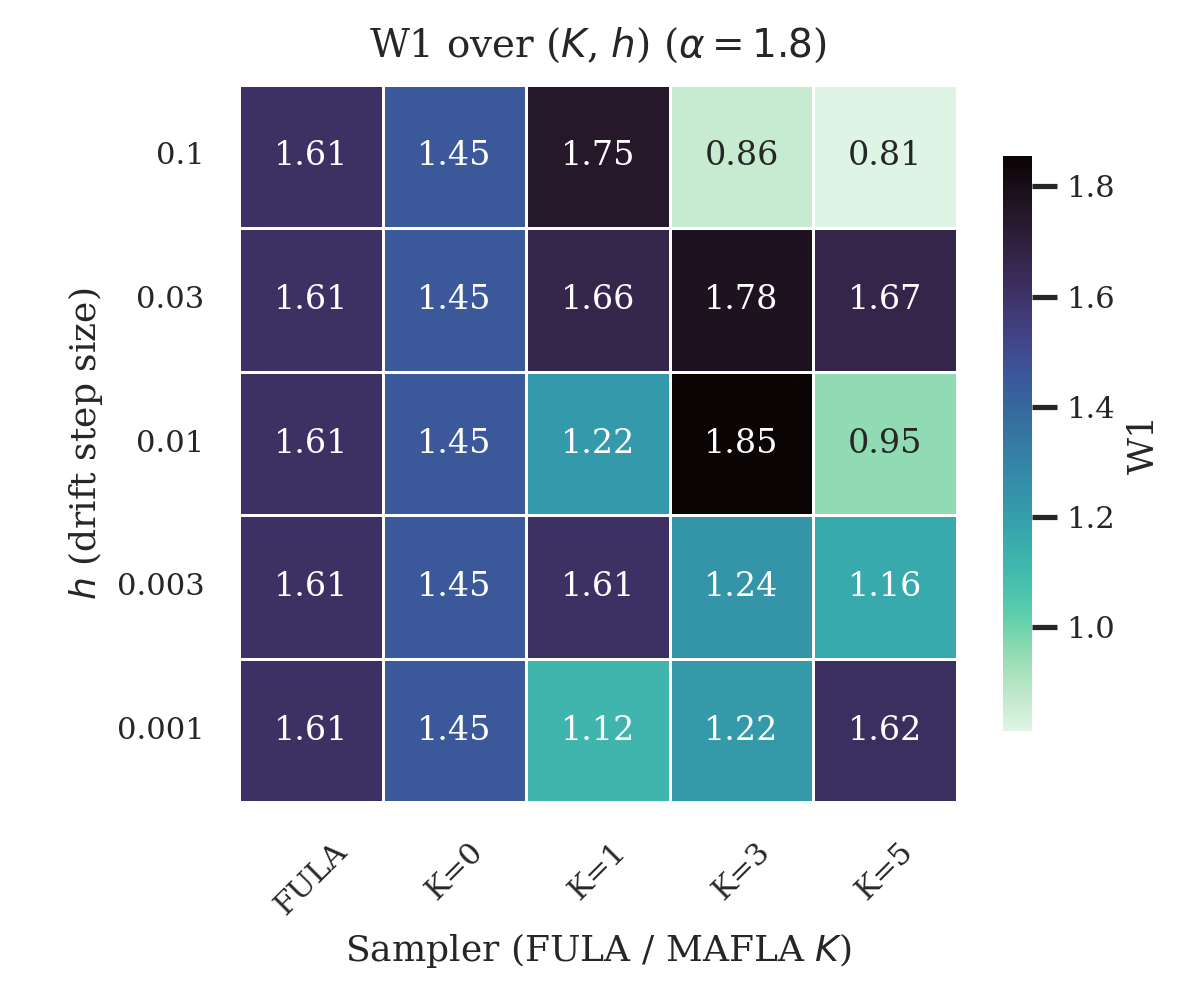}
        \caption{$W_1$ ($\alpha=1.8$)}
    \end{subfigure}
    \hfill
    \begin{subfigure}[b]{0.32\textwidth}
        \centering
        \includegraphics[width=\textwidth]{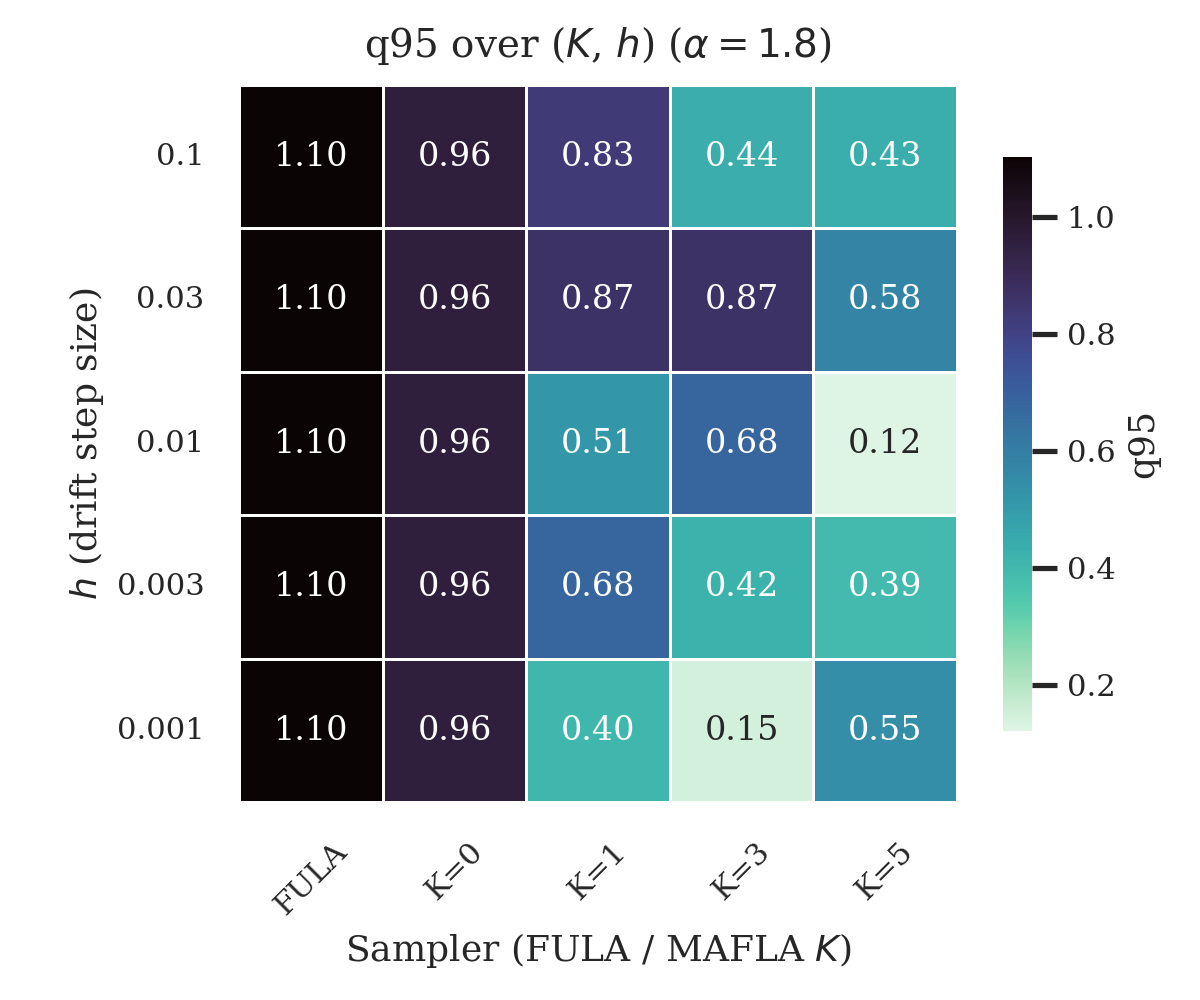}
        \caption{$q_{95}$ ($\alpha=1.8$)}
    \end{subfigure}
    \hfill
    \begin{subfigure}[b]{0.32\textwidth}
        \centering
        \includegraphics[width=\textwidth]{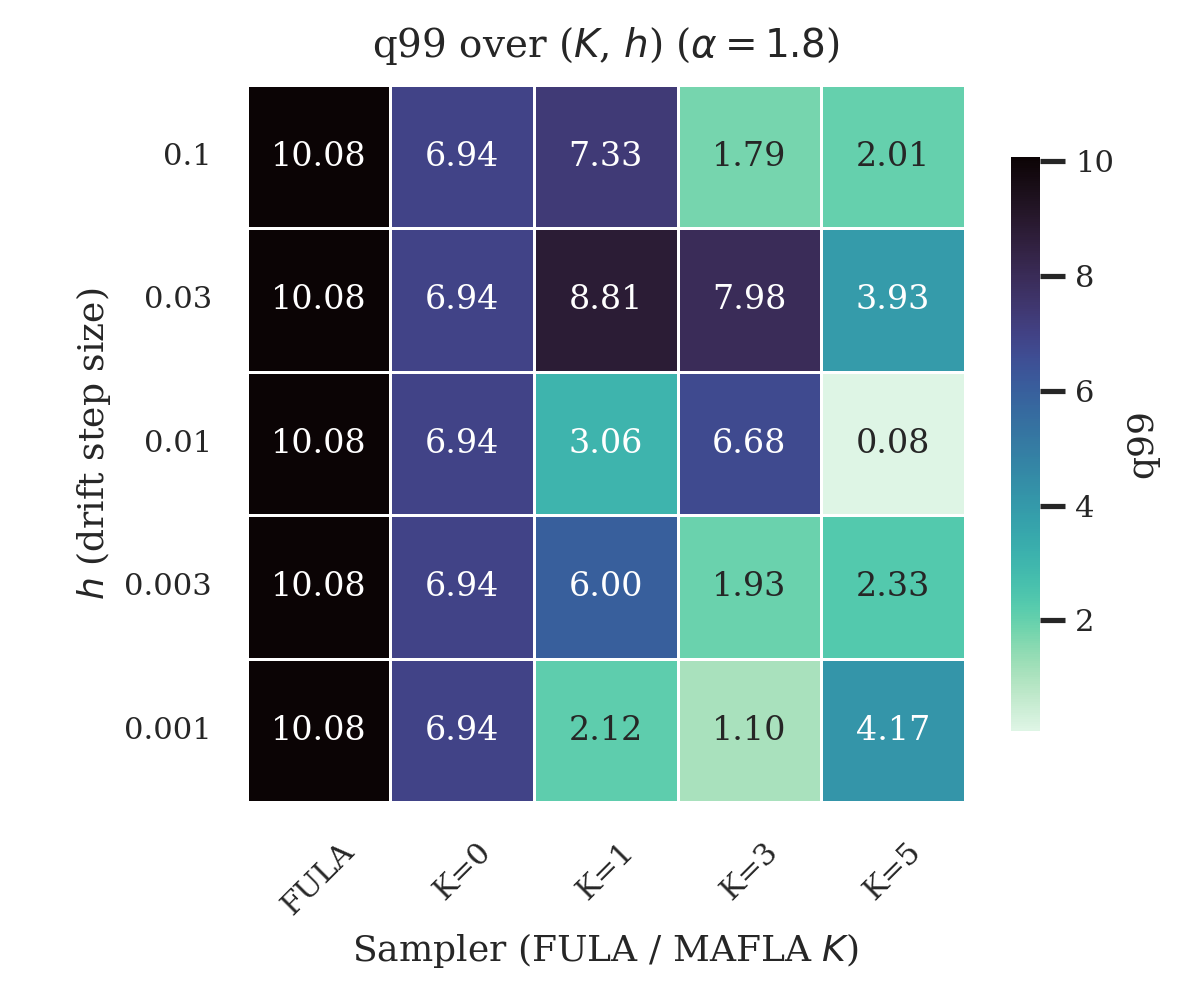}
        \caption{$q_{99}$ ($\alpha=1.8$)}
    \end{subfigure}

    \caption{\textbf{Effect of approximation order $K$ and drift step $h$ on sampling accuracy.} Rows correspond to stability indices $\alpha \in \{1.2, 1.5, 1.8\}$. Columns display the Wasserstein-1 distance ($W_1$) and tail percentile errors ($q_{95}, q_{99}$). While the baseline $K=0$ (second column in each heatmap) is consistently robust, fine-tuning the approximation order ($K \ge 1$) with an appropriate step size $h$ yields the lowest discrepancies.}
    \label{fig:heatmaps}
\end{figure}

\subsection{Ablation Study: Loss Regularization \texorpdfstring{$\lambda_\alpha$}{lambda\_alpha}}
\begin{figure}[h]
    \centering
    \begin{subfigure}[b]{0.32\textwidth}
        \centering
        \includegraphics[width=\textwidth]{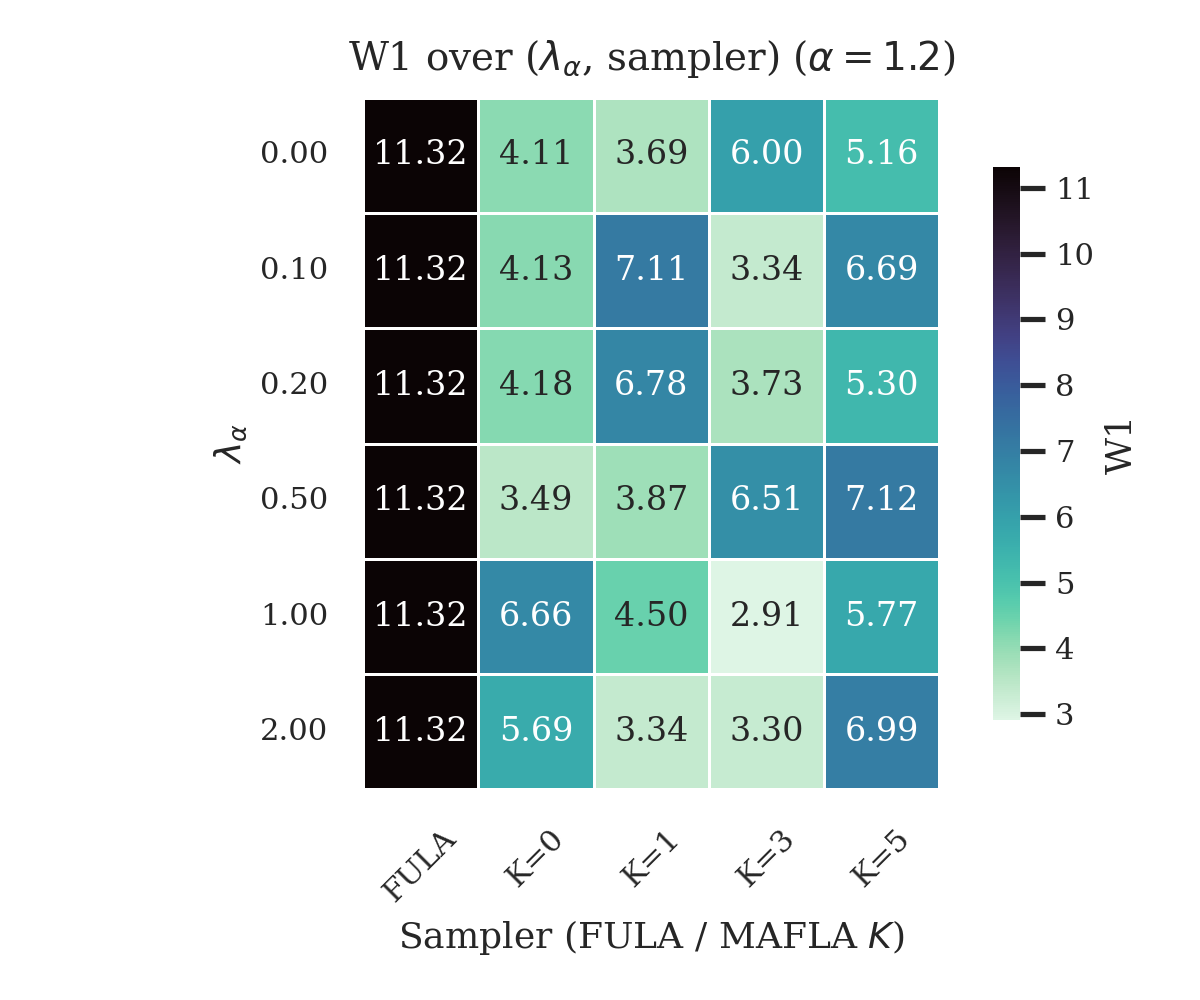}
        \caption{$W_1$ ($\alpha=1.2$)}
    \end{subfigure}
    \hfill
    \begin{subfigure}[b]{0.32\textwidth}
        \centering
        \includegraphics[width=\textwidth]{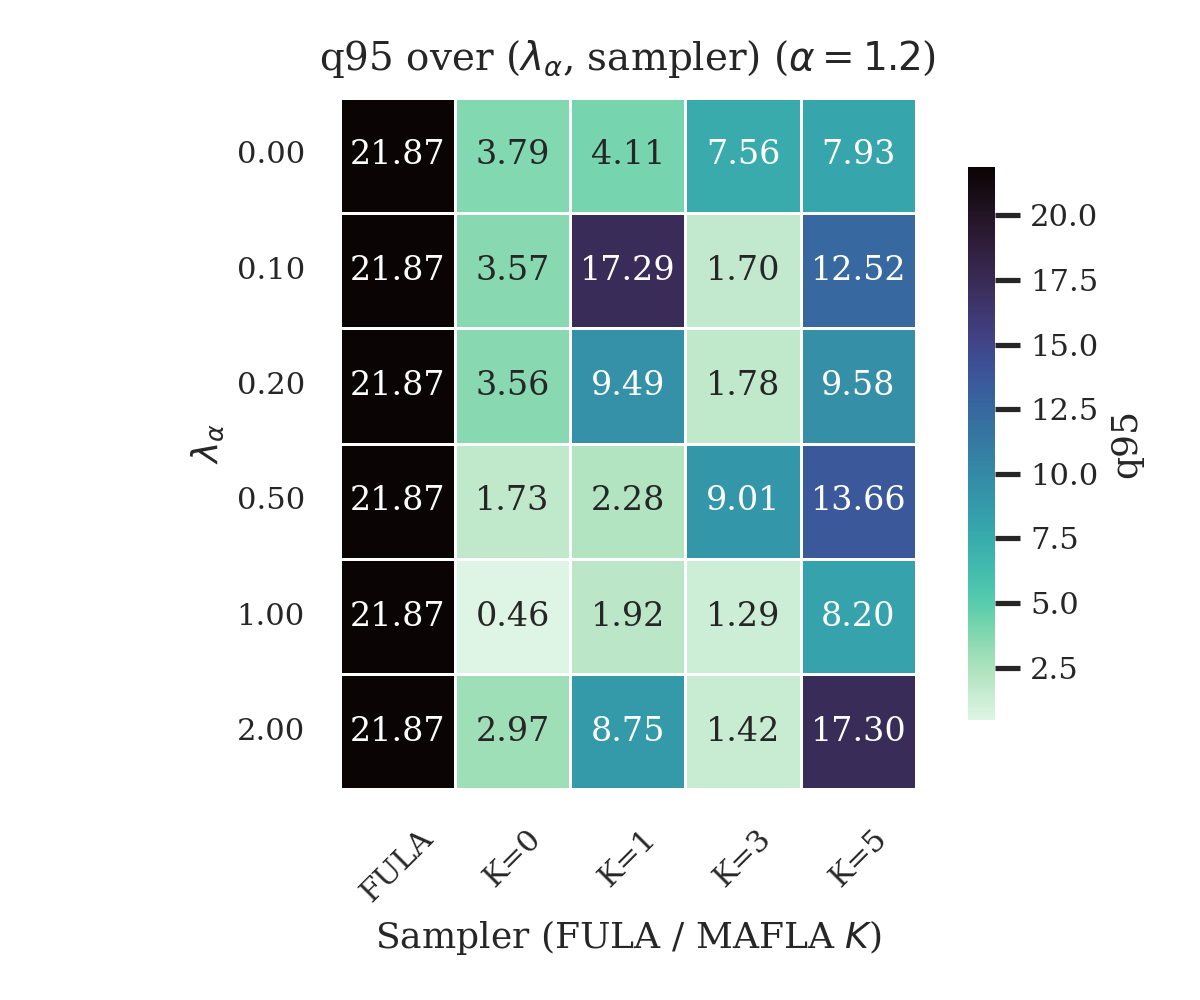}
        \caption{$q_{95}$ ($\alpha=1.2$)}
    \end{subfigure}
    \hfill
    \begin{subfigure}[b]{0.32\textwidth}
        \centering
        \includegraphics[width=\textwidth]{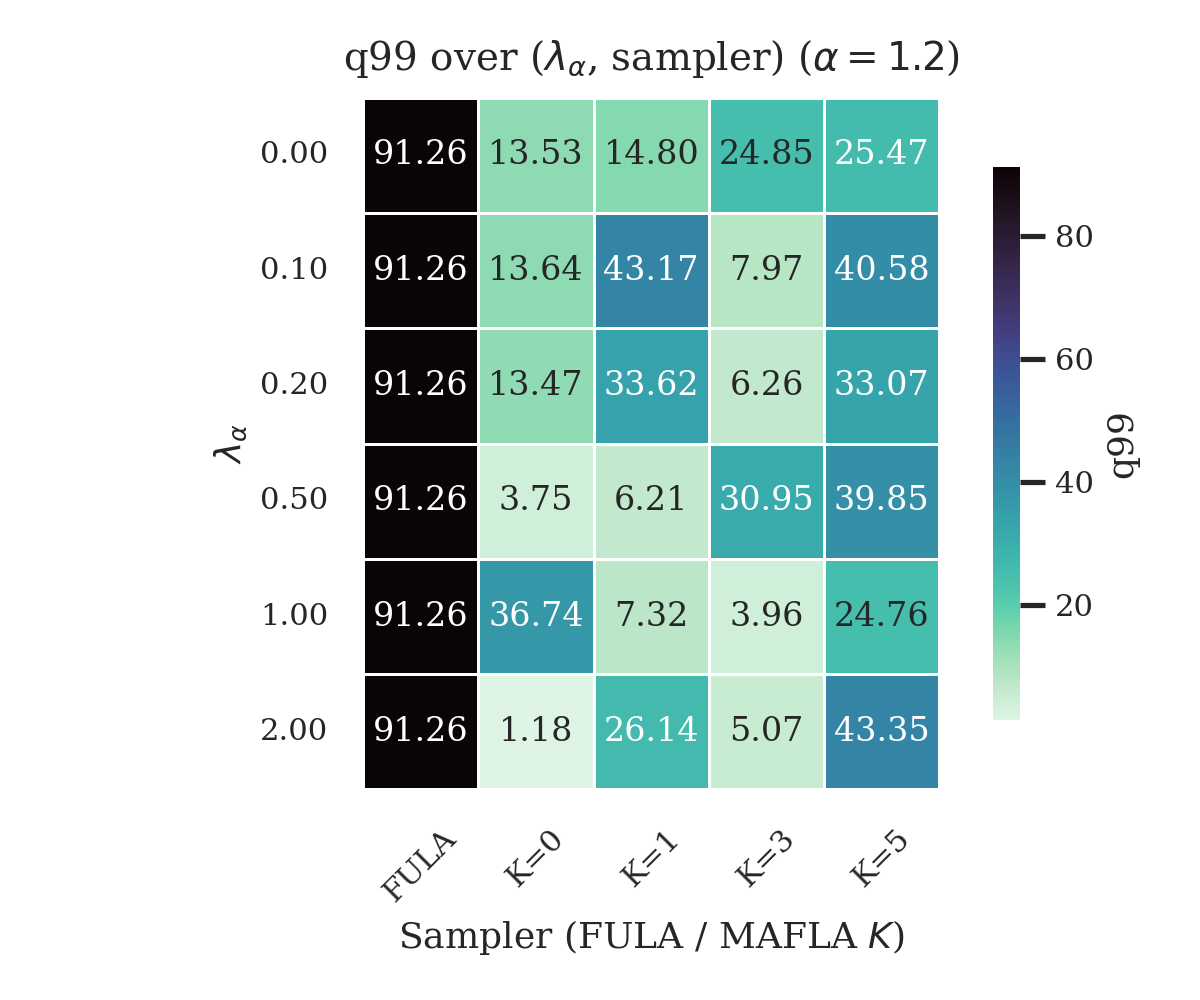}
        \caption{$q_{99}$ ($\alpha=1.2$)}
    \end{subfigure}
    
    \vspace{0.5em} 
    
    \begin{subfigure}[b]{0.32\textwidth}
        \centering
        \includegraphics[width=\textwidth]{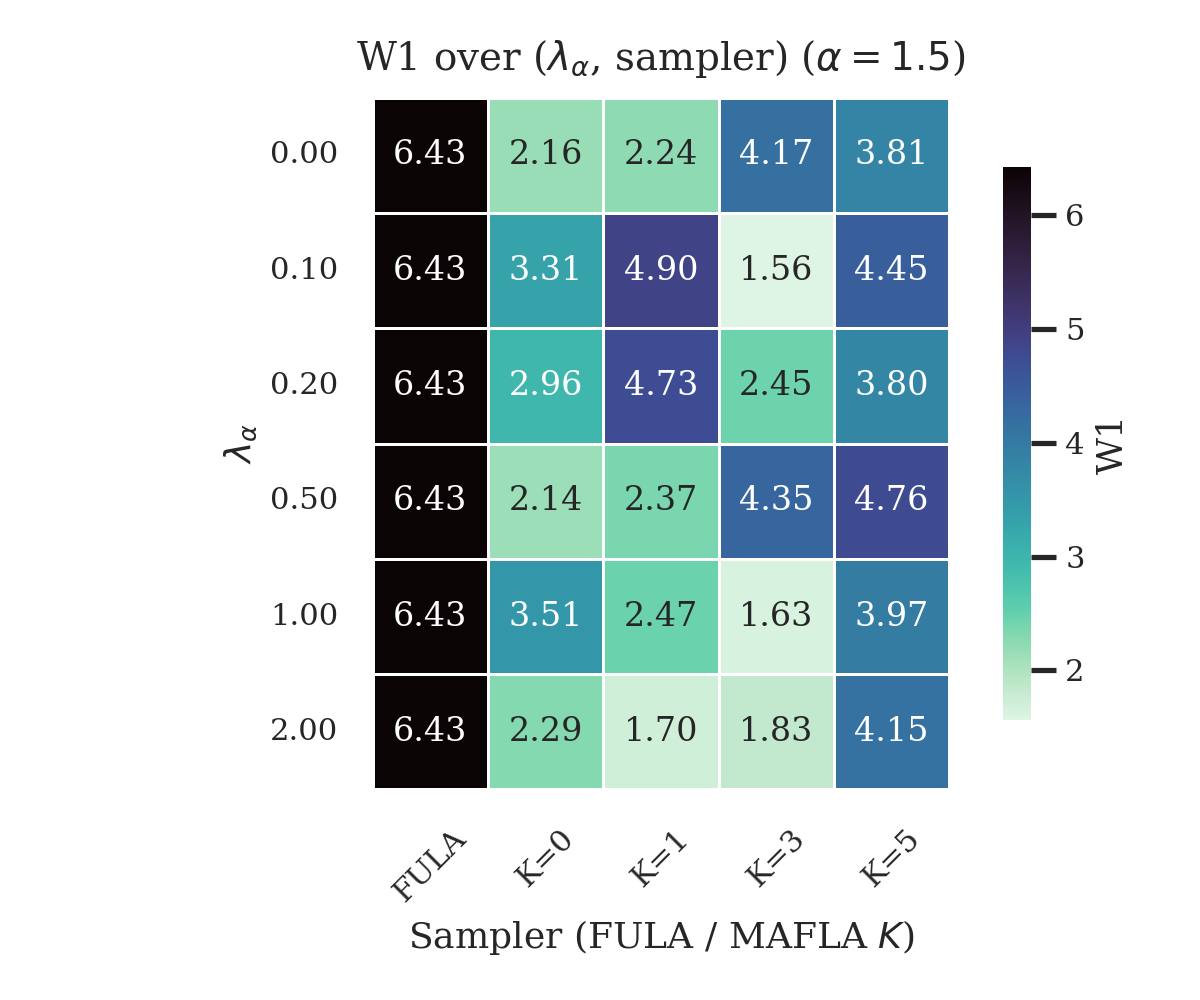}
        \caption{$W_1$ ($\alpha=1.5$)}
    \end{subfigure}
    \hfill
    \begin{subfigure}[b]{0.32\textwidth}
        \centering
        \includegraphics[width=\textwidth]{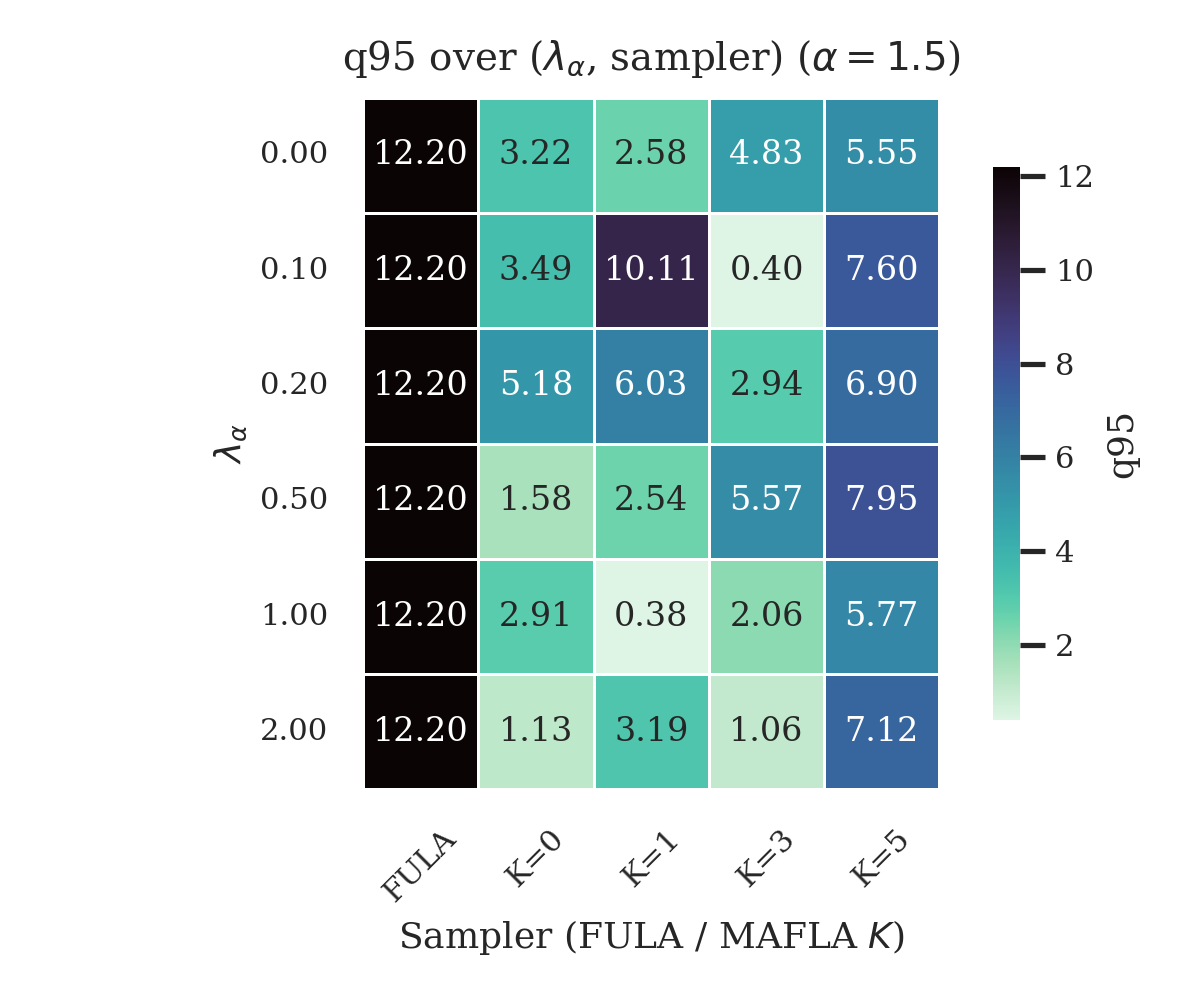}
        \caption{$q_{95}$ ($\alpha=1.5$)}
    \end{subfigure}
    \hfill
    \begin{subfigure}[b]{0.32\textwidth}
        \centering
        \includegraphics[width=\textwidth]{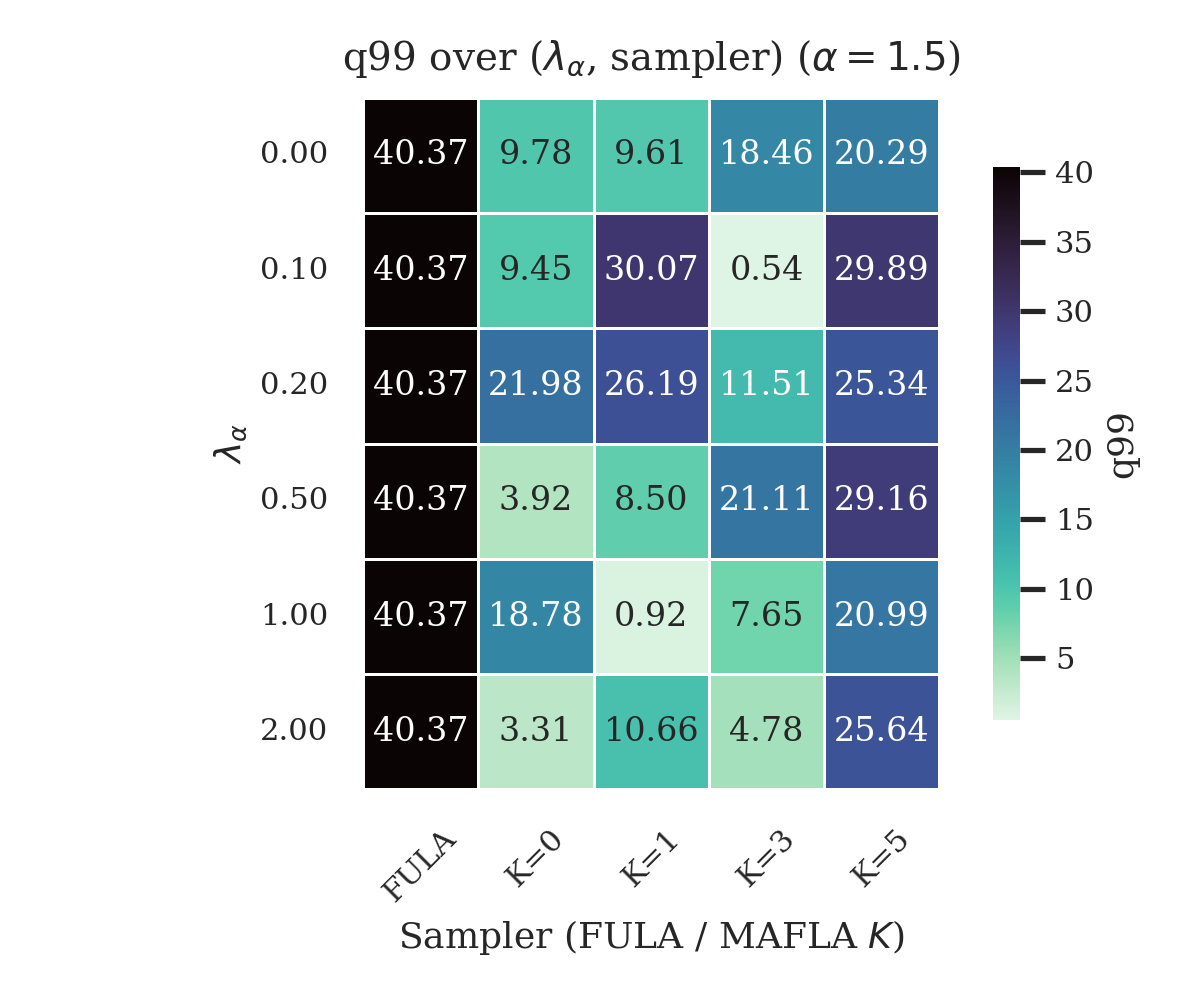}
        \caption{$q_{99}$ ($\alpha=1.5$)}
    \end{subfigure}
    
    \vspace{0.5em}

    \begin{subfigure}[b]{0.32\textwidth}
        \centering
        \includegraphics[width=\textwidth]{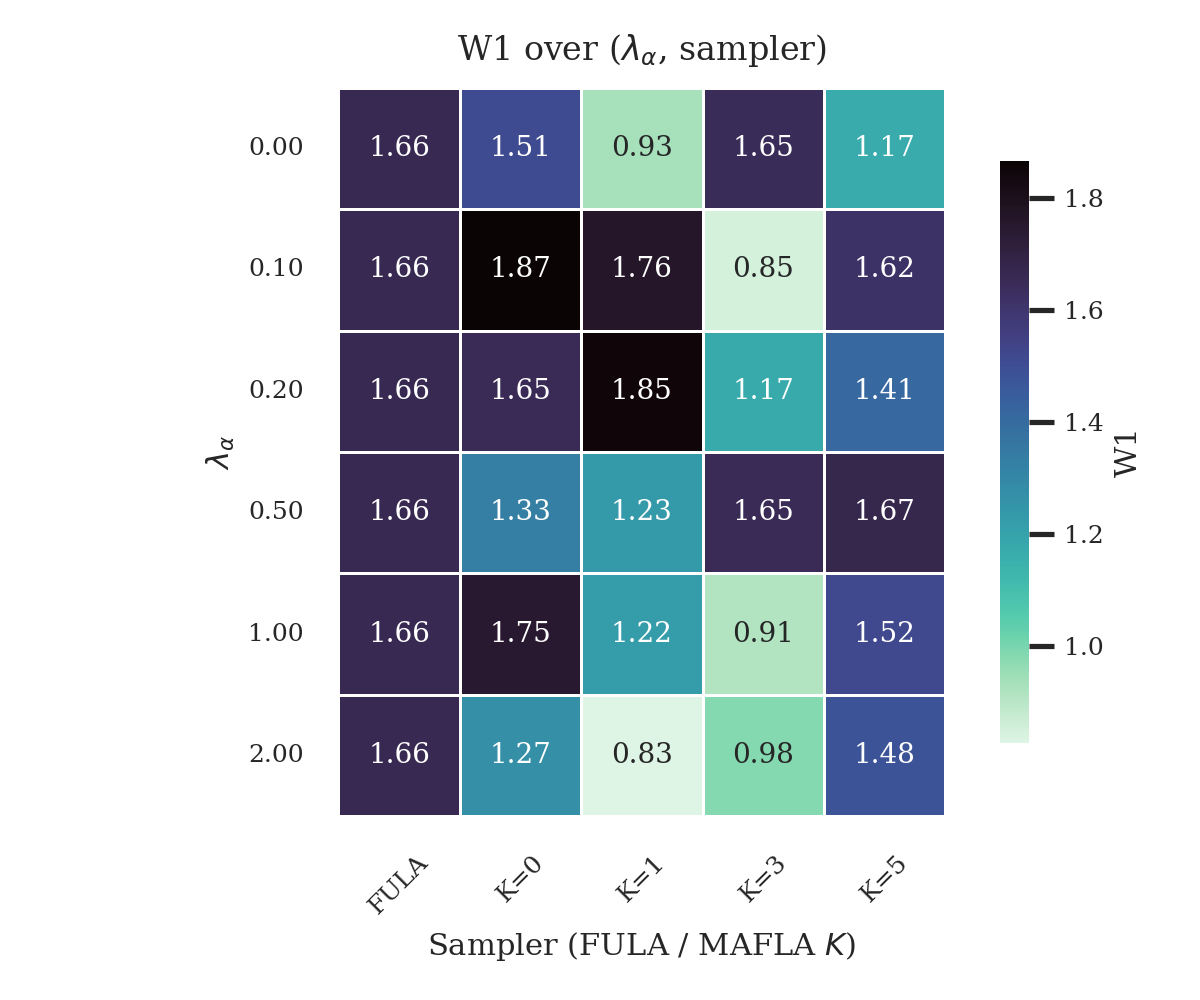}
        \caption{$W_1$ ($\alpha=1.8$)}
    \end{subfigure}
    \hfill
    \begin{subfigure}[b]{0.32\textwidth}
        \centering
        \includegraphics[width=\textwidth]{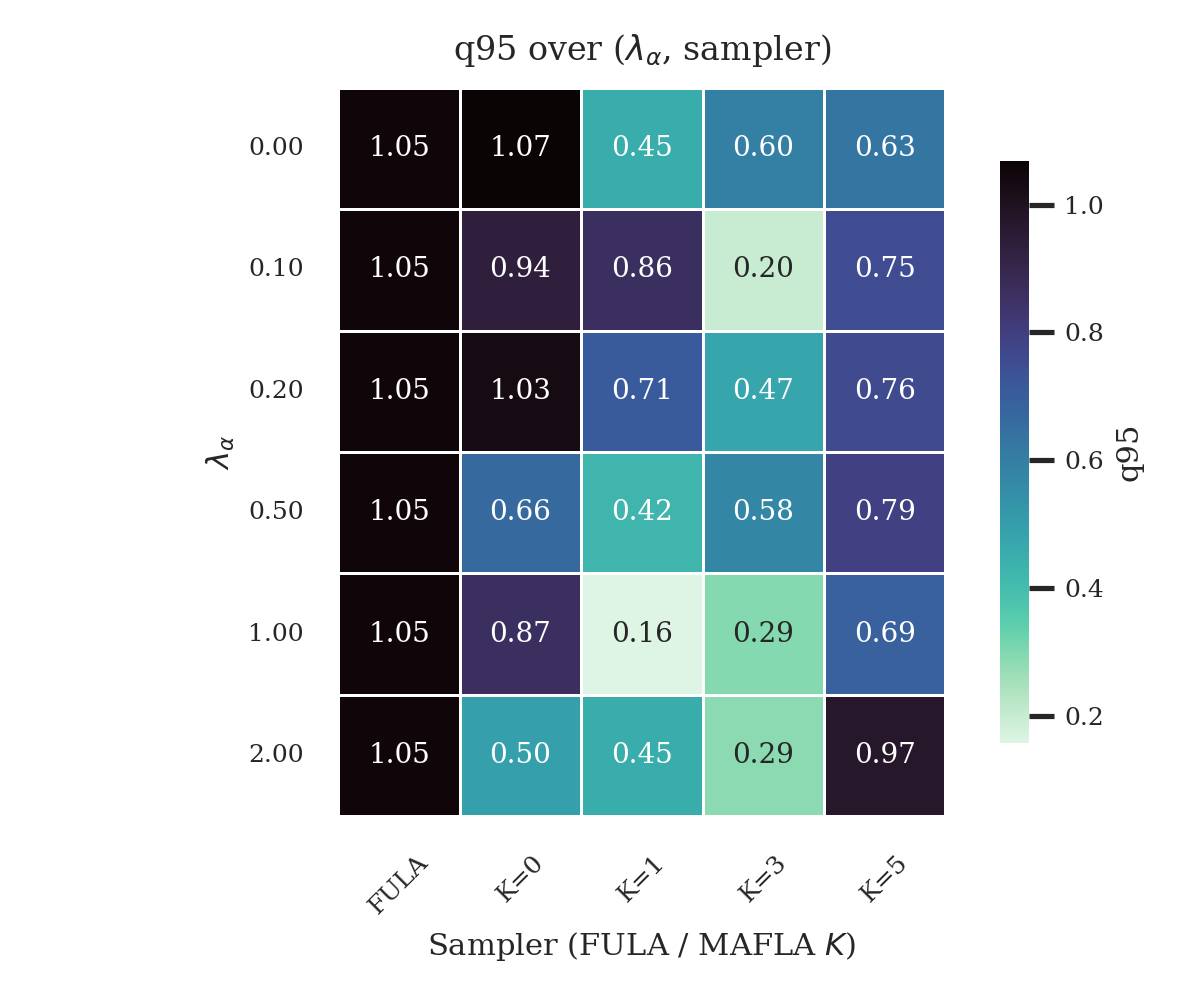}
        \caption{$q_{95}$ ($\alpha=1.8$)}
    \end{subfigure}
    \hfill
    \begin{subfigure}[b]{0.32\textwidth}
        \centering
        \includegraphics[width=\textwidth]{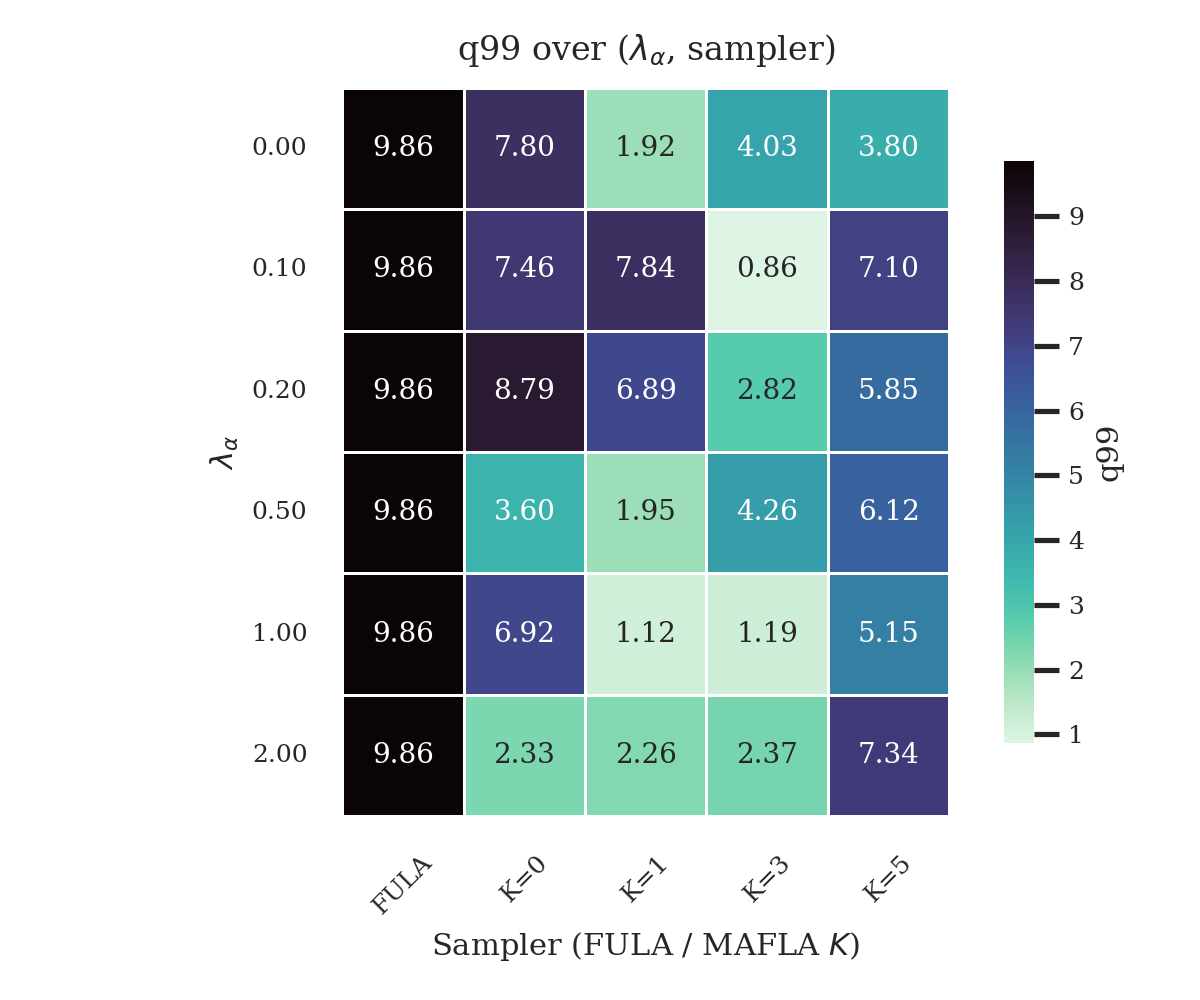}
        \caption{$q_{99}$ ($\alpha=1.8$)}
    \end{subfigure}

    \caption{Effect of loss regularization $\lambda_\alpha$ and approximation order $K$. The heatmaps show performance at fixed drift step $h=0.01$. The inclusion of the $\alpha$-norm in the Score Balance Matching objective ($\lambda_\alpha > 0$) provides significant robustness, particularly in reducing tail errors ($q_{99}$) for heavy-tailed distributions ($\alpha=1.2, 1.5$).}
    \label{fig:heatmaps_lambda}
\end{figure}

\label{sec:ablation_lambda}

To address the heavy-tailed nature of the score residuals, we proposed a hybrid Score Balance Matching objective defined as $\mathcal{L} = \mathcal{L}_2 + \lambda_\alpha \mathcal{L}_\alpha$. In this study, we investigated the impact of the regularization weight $\lambda_\alpha \in [0, 2]$ alongside the approximation order $K$, fixing the drift step size at $h=10^{-2}$.

The results, visualized in Figure~\ref{fig:heatmaps_lambda}, demonstrate that incorporating the $\alpha$-norm ($\lambda_\alpha > 0$) significantly improves sampling stability, particularly for tail statistics. 
\begin{itemize}
    \item \textbf{Tail Stability:} For heavy-tailed targets ($\alpha=1.2$), a purely $L^2$-based loss ($\lambda_\alpha=0$) yields a 99th-percentile error of $14.80$ (using $K=1$). In contrast, setting $\lambda_\alpha=2.0$ reduces this error to $1.18$ (using $K=0$), effectively constraining the variance of the learned acceptance probability in the tails.
    \item \textbf{Consistent Enhancement:} Overall, higher regularization weights ($\lambda_\alpha \in [0.5, 2.0]$) consistently enhance the sampler's ability to fit extreme quantiles without compromising the Wasserstein-1 distance.
\end{itemize}

These findings indicate that the mixed-norm objective is crucial for learning robust acceptance functions when the proposal distribution exhibits infinite variance.

\section{Case Study of Combinatorial Optimization Problems}\label{sec:codetail}
\subsection{MaxCut}\label{app:maxcut_details}

\paragraph{Combinatorial formulation.}
The MaxCut problem on an undirected graph $G=(V,E)$ with $|V|=N$ can be written as
\begin{equation*}
\min_{x\in\{-1,1\}^N} \sum_{i,j=1}^N w_{ij} x_i x_j
= x^\top W x ,
\end{equation*}
where $W=(w_{ij})_{1\le i,j\le N}$ is a symmetric weight matrix.
In our experiments, we set $W$ to be the adjacency matrix $A$ of the graph,
defined by
\[
A_{ij}=\mathbf{1}_{\{(i,j)\in E\}} .
\]
Under this choice, minimizing $x^\top W x$ is equivalent (up to an additive
constant) to maximizing the number of edges whose endpoints lie in different
partitions, i.e., the MaxCut objective.

\paragraph{Continuous relaxation and energy model.}
Directly sampling over the discrete domain $\{-1,1\}^N$ is infeasible for
gradient-based methods. We therefore introduce a continuous latent variable
$u\in\mathbb{R}^N$ and define a smooth relaxation through the elementwise
transformation
\[
y=\tanh(u)\in(-1,1)^N .
\]
Using this relaxed variable, we define the energy function $E:\mathbb{R}^N\to\mathbb{R}$ as
\begin{equation*}
E(u)
=\frac{1}{2}\sum_{i,j=1}^N w_{ij}\tanh(u_i)\tanh(u_j)
=\frac{1}{2}\tanh(u)^\top W\tanh(u).
\end{equation*}
Equivalently, in terms of $y$, we have $E(u)=E(y)=\frac{1}{2}y^\top Wy$.
This relaxation preserves the quadratic structure of the original objective
while yielding a smooth, differentiable energy landscape.

We define a target density over $u$ by
\begin{equation*}
\pi(u)\propto \exp\!\left(-\frac{E(u)}{\eta}\right),\quad u\in\mathbb{R}^N,
\end{equation*}
where $\eta>0$ is a temperature parameter. Smaller values of $\eta$ concentrate probability mass around low-energy regions,
corresponding to high-quality cuts, while larger values encourage broader
exploration.

\paragraph{Score function.}
The score function $\nabla_u \log \pi(u)$ provides the drift term for
Langevin-type sampling algorithms. By the quadratic structure of $E(y)$, we have
\[
\nabla_y E(y)=Wy.
\]
Applying the chain rule yields
\begin{equation*}
\nabla_u E(u)
=\bigl(1-\tanh(u)^2\bigr)\odot W\tanh(u),
\end{equation*}
where $\odot$ denotes elementwise multiplication. Consequently, the score function takes the explicit form
\begin{equation*}
\nabla_u \log \pi(u)
= -\frac{1}{\eta}\nabla_u E(u)
= -\frac{1}{\eta}\bigl(1-\tanh(u)^2\bigr)\odot W\tanh(u).
\end{equation*}
This closed-form (oracle) score is used by ULA, MALA, and their fractional
counterparts to guide sampling toward regions corresponding to large cuts.

\paragraph{Discretization strategy.}
After sampling a continuous state $u$, we recover a discrete cut by applying the
sign map
\begin{equation}
x=\mathrm{sgn}\!\left(\tanh(u)\right),
\qquad
\mathrm{sgn}(a)=\mathbf{1}_{\{a\ge 0\}}-\mathbf{1}_{\{a<0\}}, \quad a\in\mathbb{R}.
\end{equation}
This operation projects the relaxed variable back onto $\{-1,1\}^N$ and yields a valid partition of the graph. The resulting discrete solution is then evaluated using the original MaxCut objective.

\subsection{Vertex Cover}\label{app:vc_details}

\paragraph{Combinatorial formulation.}
Given an undirected graph $G=(V,E)$ with $|V|=N$, the minimum vertex cover problem seeks a binary assignment $x\in\{0,1\}^N$ minimizing
\begin{equation*}
\min_{x\in\{0,1\}^N} \sum_{i=1}^N x_i
\quad \text{subject to} \quad
x_i + x_j \ge 1 \quad \forall (i,j)\in E,
\end{equation*}
so that every edge is incident to at least one selected vertex.
This problem is NP-hard and serves as a canonical example of constrained
combinatorial optimization.

\paragraph{Continuous relaxation and energy model.}
To enable gradient-based sampling, we introduce a continuous latent variable
$u\in\mathbb{R}^N$ and relax the binary variables via the sigmoid transformation
\[
p=\sigma(u)\in(0,1)^N,
\qquad
\sigma(a)=\frac{1}{1+e^{-a}} .
\]
Here, $p_i$ may be interpreted as a soft selection indicator for vertex $i$.

We define a relaxed energy function consisting of two components: a soft
cardinality term and a penalty enforcing edge coverage,
\begin{equation*}
E(u)
=\sum_{i=1}^N y_i
+\lambda \sum_{(i,j)\in E} \bigl(1-y_i\bigr)\bigl(1-y_j\bigr)=\mathbf{1}^\top p+\frac{\lambda}{2}(\mathbf{1}-p)^\top A(\mathbf{1}-p),
\end{equation*}
where $A=(\mathbf{1}_{(i,j)\in E})_{1\leq i, j\leq N}$ and $\lambda>0$ controls the strength of the constraint enforcement.
This relaxation penalizes uncovered edges while encouraging sparse vertex
selections, and reduces to the original objective in the binary limit.

As in the MaxCut case, we define a target density
\begin{equation*}
\pi(u)\propto \exp\!\left(-\frac{E(u)}{\eta}\right),
\end{equation*}
where $\eta>0$ is a temperature parameter controlling the exploration--exploitation
trade-off.

\paragraph{Score function.}
The score function governing Langevin-type dynamics is given by
$$\nabla_u \log \pi(u) = -\frac{1}{\eta}\nabla_u E(u).$$
Using the chain rule and the derivative
$\sigma'(u)=\sigma(u)\bigl(1-\sigma(u)\bigr)$, we obtain
\begin{equation*}
\nabla_u E(u)=\sigma'(u)\odot\left[\mathbf{1}-\lambda A(\mathbf{1}-p)\right],
\end{equation*}
where the indicator term selects edges that remain uncovered under the relaxed assignment. This explicit (oracle) score is used by ULA, MALA, and their fractional variants to guide sampling toward low-energy, approximately feasible vertex covers.

\paragraph{Discretization and greedy decoding.}
After sampling the continuous variable $u$, we obtain an initial discrete vertex selection by thresholding:
\begin{equation*}
x_i=\mathbf{1}_{\{u_i>0\}}, \qquad i=1,\dots,N .
\end{equation*}
This produces a sparse candidate cover but may leave a subset of edges uncovered, since feasibility is not strictly enforced under the continuous relaxation.

To obtain a valid vertex cover, we apply a greedy decoding procedure.
Specifically, while there exists an uncovered edge $(i,j)\in E$ with
$x_i=x_j=0$, we add to the cover the endpoint with larger relaxed activation,
i.e.,
\[
x_{k}\leftarrow 1, \qquad
k=\arg\max_{\ell\in\{i,j\}} p_\ell ,
\]
where $p=\sigma(u)$.
This process is repeated until all edges are covered. The resulting discrete solution is guaranteed to be feasible and is evaluated
using the original vertex cover objective.

\begin{table*}[h]
\centering
\caption{Vertex Cover results on Erd\H{o}s--R\'enyi (ER) and Barab\'asi--Albert (BA) graphs.
Reported are mean\,$\pm$\,std energy, mean cover size (smaller is better), best cover found,
and uncovered ratio.}
\label{tab:vertex_cover}

\small
\begin{subtable}[t]{\textwidth}
\centering
\caption{Erd\H{o}s--R\'enyi graphs, with $\vert E\vert=2.5N$}
\begin{tabular}{llcccc}
\toprule
Graph & Sampler & Energy ($\mu\pm\sigma$) & Cover$^{\downarrow}$ ($\mu\pm\sigma$) & Best$^{\downarrow}$ & Uncov.$^{\downarrow}$ ($\mu\pm\sigma$) \\
\midrule
\multirow{4}{*}{ER64}
 & ULA   & $0.805\pm0.034$ & $49.88\pm2.10$ & 43 & $0.239\pm0.071$ \\
 & FULA  & $0.682\pm0.031$ & $46.17\pm1.96$ & 41 & $0.144\pm0.042$ \\
 & MALA  & $0.806\pm0.029$ & $49.88\pm1.95$ & 44 & $0.240\pm0.067$ \\
 & MAFLA & $0.657\pm0.028$ & $\mathbf{45.02\pm2.10}$ & $\mathbf{39}$ & $\mathbf{0.127\pm0.035}$ \\
\midrule
\multirow{4}{*}{ER256}
 & ULA   & $0.810\pm0.018$ & $198.66\pm4.42$ & 183 & $0.248\pm0.035$ \\
 & FULA  & $0.709\pm0.018$ & $187.14\pm4.53$ & 173 & $0.160\pm0.025$ \\
 & MALA  & $0.812\pm0.016$ & $199.03\pm4.35$ & 188 & $0.247\pm0.035$ \\
 & MAFLA & $0.685\pm0.014$ & $\mathbf{181.97\pm4.27}$ & $\mathbf{169}$ & $\mathbf{0.139\pm0.020}$ \\
\midrule
\multirow{4}{*}{ER512}
 & ULA   & $0.811\pm0.013$ & $398.60\pm6.19$ & 379 & $0.248\pm0.026$ \\
 & FULA  & $0.743\pm0.013$ & $385.83\pm6.54$ & 364 & $0.179\pm0.019$ \\
 & MALA  & $0.812\pm0.012$ & $398.93\pm6.41$ & 378 & $0.249\pm0.026$ \\
 & MAFLA & $0.727\pm0.011$ & $\mathbf{379.06\pm6.19}$ & $\mathbf{358}$ & $\mathbf{0.158\pm0.016}$ \\
\midrule
\multirow{4}{*}{ER1024}
 & ULA   & $0.812\pm0.009$ & $796.43\pm8.34$ & 771 & $0.249\pm0.018$ \\
 & FULA  & $0.766\pm0.009$ & $780.81\pm8.49$ & 753 & $0.196\pm0.015$ \\
 & MALA  & $0.812\pm0.008$ & $796.77\pm8.68$ & 768 & $0.249\pm0.017$ \\
 & MAFLA & $0.759\pm0.008$ & $\mathbf{775.94\pm8.82}$ & $\mathbf{747}$ & $\mathbf{0.184\pm0.013}$ \\
\bottomrule
\end{tabular}
\end{subtable}

\vspace{1em}

\begin{subtable}[t]{\textwidth}
\centering
\caption{Barab\'asi--Albert graphs, with $m=2$}
\begin{tabular}{llcccc}
\toprule
Graph & Sampler & Energy ($\mu\pm\sigma$) & Cover$^{\downarrow}$ ($\mu\pm\sigma$) & Best$^{\downarrow}$ & Uncov.$^{\downarrow}$ ($\mu\pm\sigma$) \\
\midrule
\multirow{4}{*}{BA64}
 & ULA   & $1.327\pm0.212$ & $45.43\pm2.70$ & 38 & $0.115\pm0.049$ \\
 & FULA  & $0.842\pm0.191$ & $39.56\pm3.31$ & $\mathbf{31}$ & $\mathbf{0.048\pm0.034}$ \\
 & MALA  & $1.431\pm0.212$ & $45.43\pm2.73$ & 38 & $0.128\pm0.051$ \\
 & MAFLA & $0.872\pm0.225$ & $\mathbf{38.61\pm3.56}$ & $\mathbf{31}$ & $0.057\pm0.040$ \\
\midrule
\multirow{4}{*}{BA256}
 & ULA   & $1.720\pm0.178$ & $181.20\pm5.84$ & 166 & $0.194\pm0.040$ \\
 & FULA  & $0.942\pm0.119$ & $166.96\pm7.23$ & 148 & $0.059\pm0.020$ \\
 & MALA  & $1.780\pm0.166$ & $181.12\pm5.80$ & 165 & $0.203\pm0.040$ \\
 & MAFLA & $0.905\pm0.115$ & $\mathbf{163.96\pm7.67}$ & $\mathbf{140}$ & $\mathbf{0.053\pm0.020}$ \\
\midrule
\multirow{4}{*}{BA512}
 & ULA   & $1.827\pm0.145$ & $360.52\pm7.69$ & 336 & $0.216\pm0.031$ \\
 & FULA  & $1.010\pm0.094$ & $345.06\pm9.63$ & 320 & $0.067\pm0.016$ \\
 & MALA  & $1.865\pm0.137$ & $361.07\pm7.81$ & 338 & $0.222\pm0.032$ \\
 & MAFLA & $0.967\pm0.082$ & $\mathbf{342.12\pm9.70}$ & $\mathbf{316}$ & $\mathbf{0.059\pm0.014}$ \\
\midrule
\multirow{4}{*}{BA1024}
 & ULA   & $1.892\pm0.116$ & $716.20\pm11.65$ & 679 & $0.228\pm0.025$ \\
 & FULA  & $1.094\pm0.076$ & $703.16\pm12.64$ & 670 & $0.079\pm0.013$ \\
 & MALA  & $1.921\pm0.108$ & $716.20\pm11.76$ & 684 & $0.233\pm0.026$ \\
 & MAFLA & $1.027\pm0.065$ & $\mathbf{696.98\pm12.99}$ & $\mathbf{655}$ & $\mathbf{0.066\pm0.011}$ \\
\bottomrule
\end{tabular}
\end{subtable}
\end{table*}

\paragraph{Discussion.} The empirical gains observed in both MaxCut and Vertex Cover can be attributed to two complementary aspects of our framework: the use of $\alpha$-stable fractional dynamics and the incorporation of an acceptance correction. Fractional Langevin dynamics introduce heavy-tailed noise, which enables occasional long-range moves in the continuous relaxation space. Compared to Gaussian perturbations, this behavior facilitates exploration across energy barriers and helps the sampler escape local minima induced by the highly nonconvex combinatorial objectives. This effect becomes increasingly pronounced as graph size grows, where the energy landscape exhibits a large number of metastable configurations.

While fractional dynamics enhance exploration, they can also lead to unstable updates or overly aggressive proposals when used without correction. The acceptance mechanism in MAFLA mitigates this issue by selectively rejecting unlikely transitions, thereby stabilizing the sampling process and preserving the target distribution. Empirically, this combination yields consistent improvements over both Gaussian
samplers and non-adjusted fractional methods, achieving better trade-offs between exploration, stability, and solution quality across problem instances.

\end{document}